\newtheorem{proposition}{Proposition}
\newtheorem{remark}{Remark}
\DeclareMathOperator*{\argmin}{arg\,min}
\begin{document}

\title{
Variations of Augmented Lagrangian for \\ Robotic Multi-Contact Simulation
}
\author{Jeongmin Lee, Minji Lee, Sunkyung Park, Jinhee Yun, and Dongjun Lee
\thanks{This research was supported by the National Research Foundation (NRF) grant funded by the Ministry of Trade, Industry \& Energy (MOTIE) of Korea (RS-2024-00419641), and the Ministry of Science and ICT (MSIT) of Korea (RS-2022-00144468). Corresponding author: Dongjun Lee.
}
\thanks{The authors are with the Department of Mechanical Engineering, IAMD and IOER, Seoul National University, Seoul, Republic of Korea.
\{ljmlgh,mingg8,sunk1136,yjhs0932,djlee\}@snu.ac.kr. }
}
\maketitle

\begin{abstract}
The multi-contact nonlinear complementarity problem (NCP) is a naturally arising challenge in robotic simulations.
Achieving high performance in terms of both accuracy and efficiency remains a significant challenge, particularly in scenarios involving intensive contacts and stiff interactions. 
In this article, we introduce a new class of multi-contact NCP solvers based on the theory of the Augmented Lagrangian (AL). We detail how the standard derivation of AL in convex optimization can be adapted to handle multi-contact NCP through the iteration of surrogate problem solutions and the subsequent update of primal-dual variables. 
Specifically, we present two tailored variations of AL for robotic simulations: the Cascaded Newton-based Augmented Lagrangian (CANAL) and the Subsystem-based Alternating Direction Method of Multipliers (SubADMM). We demonstrate how CANAL can manage multi-contact NCP in an accurate and robust manner, while SubADMM offers superior computational speed, scalability, and parallelizability for high degrees-of-freedom multibody systems with numerous contacts. Our results showcase the effectiveness of the proposed solver framework, illustrating its advantages in various robotic manipulation scenarios.
\end{abstract}

\begin{IEEEkeywords}
Contact modeling, simulation and animation, dynamics, dexterous manipulation
\end{IEEEkeywords}

\section{Introduction}

\IEEEPARstart{P}{h}ysics simulation is a fundamental tool for the development of robotic intelligence, as it enables scalable data acquisition, training, and safe testing of various algorithms and designs. 
Moreover, simulations can be directly employed to solve modeled system dynamics, proving invaluable for a range of applications such as global planning, trajectory optimization, and parameter estimation. 
This significance has led to the development of diverse open-source platforms \cite{bullet,mujoco,physx,brax,RaiSim,sofa}, which are increasingly being utilized in various research endeavors.

An essential focus in robotic simulation research revolves around achieving results that are both accurate and efficient in terms of memory and computation time. This presents a comprehensive and challenging problem, encompassing diverse considerations such as discrete-time integration, defining various geometric/physical constraints, incorporating friction, managing system-induced sparsity, and selecting numerical algorithms. 
Among these factors, multi-contact plays a crucial role in mimicking interactions between objects. A prevalent velocity-level modeling of such constraints \cite{stewart1996implicit} naturally induces a nonlinear complementarity problem (NCP), which is generally challenging to solve.

Typically, contact solvers for physics simulations must balance three crucial factors: efficiency, accuracy, and robustness. However, finding a universal solution remains challenging. 
Methods developed for graphics and game engines tend to prioritize efficiency and robustness, aiming to deliver visually plausible results, even if early termination occurs. However, they are known to converge slowly and may struggle with achieving highly accurate solutions. They frequently encounter difficulties in handling intensive contact interactions (i.e., where constraints are dense and numerous relative to the system degrees of freedom), which is common in robotic manipulation. 
Conversely, achieving a highly accurate solution for NCP often involves complex matrix operations and numerically sensitive processes, which generally lack efficiency and robustness for practical robotic applications. 
Moreover, some approaches aim to enhance efficiency and robustness by relaxing the contact constraints and exploiting them during the solving stage. However, such relaxations can be challenging to physically interpret, and the solutions they produce may exhibit undesirable physical behaviors.

\begin{figure}[t]
\centering
\includegraphics[width=8cm]{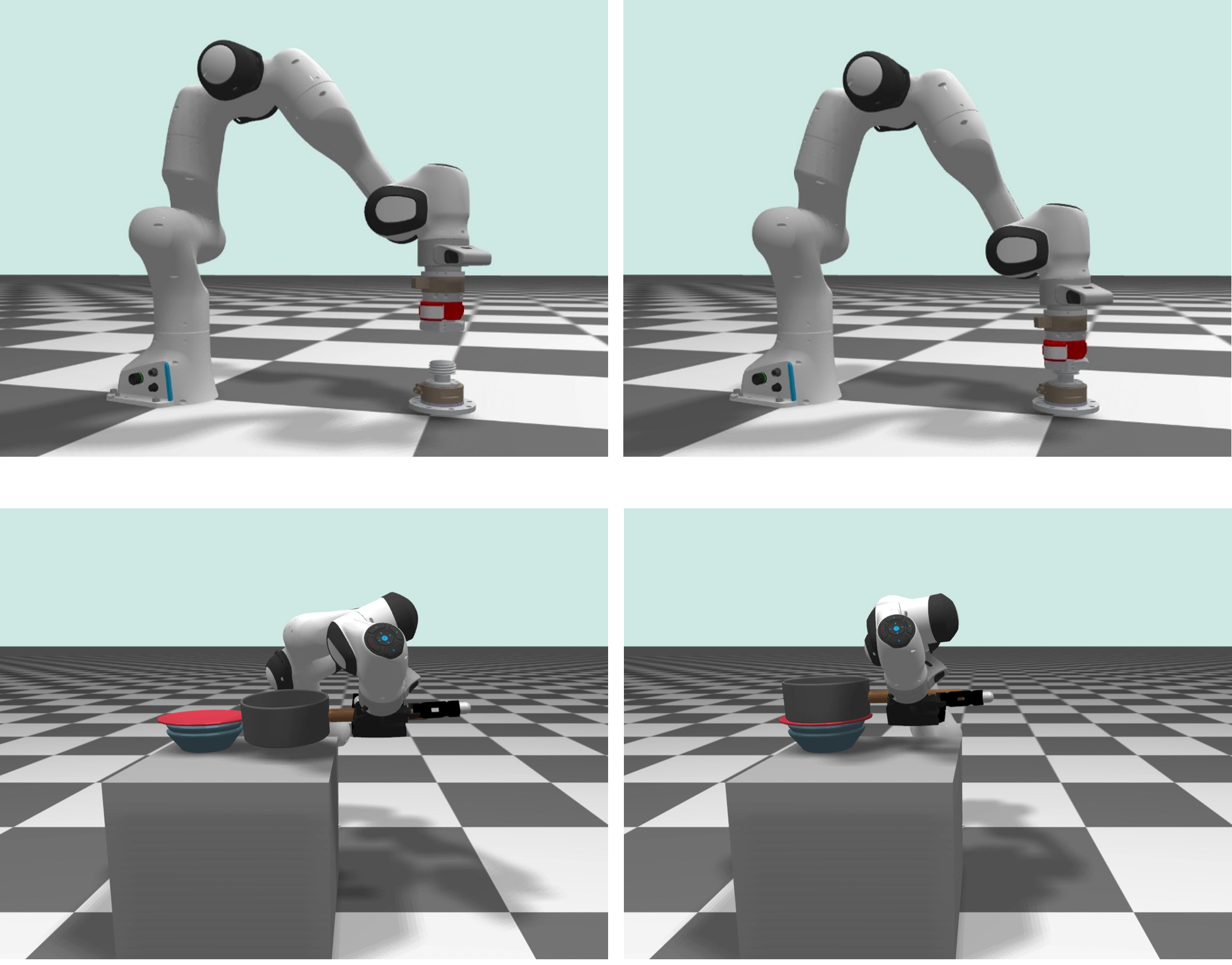}
\caption{Snapshots of a robotic simulation using our multi-contact solver. Top: bolt-nut assembly. Bottom: dish piling. Although intensive contact formation and stiff interactions make these scenarios challenging to simulate, our solvers successfully complete the simulations less than a $\rm{ms}$ of time budget per step.}
\label{fig:thumbnail}
\end{figure}

In this article, we introduce a new series of multi-contact solvers for robotic simulation based on the theory of augmented Lagrangian (AL). 
We demonstrate how the variations of AL can address the multi-contact NCP for robotic simulations, by iteratively solving surrogate problems, thereby enabling the proximal solution converges in a stable and robust manner.
Specifically, we present two algorithms that are practically applicable to robotic simulation: the cascaded Newton-based Augmented Lagrangian method (CANAL) and the subsystem-based Alternating Direction Method of Multipliers (SubADMM).
We explain how these two variations are advantageous in scenarios requiring precise management of high-density intensive contact and parallelized, scalable handling of high degree of freedom (DOF) multibody contact, respectively.
Several robotic simulations, particularly those involving challenging multi-contact scenarios, are implemented and demonstrated to validate our framework.

The rest of the article is structured as follows. 
In Sec.~\ref{sec:relatedworks} we review the development and utilization of multi-contact solvers in robotic applications and beyond.
Sec.~\ref{sec:background} provides essential background materials necessary to present our AL-based multi-contact solver.
Then, Sec.~\ref{sec:multicontactAL} presents our core theories and structures for the AL-based multi-contact solver. This leads to Sec.~\ref{sec:canal}, which outlines the first practical variation as the cascaded Newton-based AL, and Sec.~\ref{sec:subadmm}, which introduces the other variation: subsystem-based ADMM.
Sec.~\ref{sec:exampleeval} illustrates the implementation results of our solver in physics simulation and evaluates its performance under various robotic manipulation scenarios.
Finally, Sec.~\ref{sec:discussremark} conclude the article with discussions and remarks.

\section{Related Works} \label{sec:relatedworks}

In this section, we summarize the multi-contact modeling and solver algorithms that have been utilized in robotic simulation. See also Table~\ref{table:simulator_compare} for the comparison of widely used simulators in robotics.

\subsection{Direct Method}
The conventional approach to handling dynamics equations with multi-contact constraints involves formulating the equations as a linear complementarity problem (LCP) \cite{potra1997formulating} then applying Lemke's algorithms \cite{llyod2005fast} or Dantzig's pivoting algorithms. 
While these direct methods can guarantee accuracy, they often suffer from high computational complexity. Moreover, the LCP-based formulation necessitates polygonal friction cone approximation, leading to undesirable error in friction behavior. 
In robotic simulation software, DART \cite{lee2018dart}, ODE \cite{ode}, and Bullet \cite{bullet} provide implementations of Dantzig's method to solve the LCP problem.

\subsection{Per-Contact Iteration}

More widely used in recent years are iterative methods, which typically involve locally performing an impulse projection step to achieve global equilibrium. 
One of the most popular iteration schemes is projected Gauss-Seidel (PGS), which has been extensively developed and adopted in the game and graphics community \cite{macklin2014unified,macklin2016xpbd} as well as in robotics \cite{todorov2014convex,horak2019similarities}. 
These methods are known for being simple, robust, and advantageous in generating visually plausible results. 
However, they often experience slow convergence and limited efficiency, especially when the constraints are highly coupled. 
These weaknesses are particularly emphasized in robotic simulation, as the generalized coordinate representation (e.g., robot joint angles) is common, and over-specified contact (i.e., system DOF $<$ constraint DOF) is prevalent in manipulation tasks.
Several research efforts have aimed to enhance the performance of impulse iteration methods. In \cite{hwangbo2018per}, the bisection method is presented as a potential replacement for the local projection scheme in PGS, demonstrating its effectiveness in quadruped locomotion simulation. Additionally, a substepping variant of PGS, named temporal Gauss-Seidel (TGS), is introduced in \cite{macklin2019small}, showing its better convergence in various situations.
Unlike direct methods, iterative methods can be applied to various types of problem modeling, including LCP, cone complementarity problems (CCP), nonlinear complementarity problems (NCP), and also their position-based dynamics (PBD) variants \cite{muller2020detailed}. 
As a result, they are employed in a wide range of simulation software, including Bullet \cite{bullet}, MuJoCo \cite{mujoco}, RaiSim \cite{RaiSim}, and Isaac Sim \cite{makoviychuk2021isaac}.

\subsection{Nonlinear Equation}

Another approach to dealing with multi-contact simulation is to express all required relations in nonlinear equation form and solve them using gradient descent iteration. 
Implicit penalty-based contact, often referred to as regularized contact, exhibits the most natural connection to this approach, as demonstrated in \cite{geilinger2020add,castro2020transition}. 
However, penalty methods have well-known weaknesses that they often necessitate parameter tuning to achieve plausible results, and high penalty gains can lead to numerical issues.
For the other direction, in \cite{howell2022dojo} construct and solve a nonlinear equation with complementarity smoothing, and \cite{macklin2019nonsmooth} we derived a nonsmooth equation using the complementarity function (e.g., Fischer-Burmeister).
While these methods typically exhibit superlinear convergence, the intricate nature of contact conditions frequently leads to lack of robustness or challenges in line search. 
Addressing this issue, the Newton-based techniques \cite{castro2022unconstrained} and conjugate gradient (CG) algorithm for regularized convex contact models aim to ensure algorithmic robustness, albeit at the potential expense of physical accuracy. 
Among current simulation software, MuJoCo and Drake \cite{drake} are incorporating nonlinear equation-based solvers.

\begin{table}[t]
\centering
\caption{Comparison of contact models and solvers used in popular robotic simulators.}
\renewcommand{\arraystretch}{2.0}{
\resizebox{8.0cm}{!}{
\begin{tabular}{|c|c|c|c|c|c|c|}
\hline
& Bullet & MuJoCo & DART & PhysX & Drake & ODE \\
\hline
\hline
Model & LCP & Convex & LCP & NCP & Convex & LCP \\
\hline 
Solver & \makecell{Direct \\ PGS} & \makecell{Newton \\ CG \\ PGS} & \makecell{Direct \\ PGS} & \makecell{PGS \\ TGS} & Newton & \makecell{Direct \\ PGS} \\
\hline 
\end{tabular}
}   
}
\label{table:simulator_compare}
\end{table}

\subsection{Augmented Lagrangian}
 
Proximal algorithms, which were possibly pioneered by Moreau \cite{moreau1962fonctions} comprise a class of methods designed to address constrained convex optimization problems by sequentially solving a series of subproblems. 
The augmented Lagrangian (AL) method can be viewed as a class of proximal algorithm \cite{parikh2014proximal}, as it formulates subproblems using the method of multipliers and a penalty term. 
Typically, the subproblems are addressed through simpler solutions or tailored designs, which has spurred the development of numerous open-source libraries that implement these strategies, thereby facilitating broader access to robust optimization tools. Notable examples include libraries for quadratic programming, such as OSQP \cite{stellato2020osqp} and QPALM \cite{hermans2019qpalm}, and for cone programming, such as SCS \cite{sopasakis2019superscs}.
In robotics, proximal algorithms have been effectively utilized to address constraints within computational structures, notably in applications such as factor graph optimization \cite{bazzana2024augmented} and differentiable dynamics programming \cite{howell2019altro}.
The utility in solving robot dynamics with equality constraints is presented in \cite{carpentier2021proximal}. 
Our previous work \cite{lee2023modular} presents a specific algorithm based on the Augmented Lagrangian (AL) method to achieve effective parallelization in contact simulations. Building upon this foundation, this article extends the general theory and variations of AL designed to handle robotic simulations involving contact.

\section{Preliminary} \label{sec:background}

\subsection{Discretized Dynamics}
We consider following continuous-time equations of motion:
\begin{align} \label{eq:continuous_dyn}
    M(q)\ddot{q} = f(q,\dot{q}) + J(q)^T\lambda
\end{align}
where $q\in\mathbb{R}^n$ is the generalized coordinate variable of system, $M(q)\in\mathbb{R}^{n\times n}$ is the system mass matrix, $f\in\mathbb{R}^n$ is the generalized force (including Coriolis/gravitational force, external input, etc.) and $\lambda\in\mathbb{R}^{n_c}, J(q)\in\mathbb{R}^{n_c\times n}$ are the constraint impulse and Jacobian with $n,n_c$ being the system/constraint dimension.
In typical robotic simulation, the discretized version of the equation \eqref{eq:continuous_dyn} is employed:
\begin{align} \label{eq:discrete_dyn}
\begin{split}
    &M_k(v_{k+1}-v_k) = f_k t_k+ J_{k}^T\lambda_{k} \\
    &\hat{v}_k=\theta v_k+ (1-\theta)v_{k+1} \\
    &q_{k+1} \leftarrow \text{update}(q_k, \hat{v}_k, t_k)
\end{split}
\end{align}
where $k$ denotes the time step index, $t_k$ is the step size, and the $v_k$ is the generalized velocity at the $k$-th step. 
In this work, we primarily integrate explicit and implicit schemes. Specifically, we utilize $M_k = M(q_k)$ and $f_k = f(q_k, v_k)$, while employing the representative mid-step velocity $\hat{v}_k \in \mathbb{R}^n$ for state updates and constraint handling.
Here, $\theta \in [0, 1]$ determines the precise integration rule, while its impact on physical behavior is discussed in \cite{kim2017haptic}.
From now on, time step index $k$ will be omitted for simplicity but note that all components are still time(step)-varying.

\subsection{Constraint Models}

Throughout this article, we classify the constraints on the multibody system into three categories: soft, hard, and contact constraints.
Similar to many other simulators \cite{RaiSim, bullet, mujoco}, the constraint model can be formulated by relation between the velocity $\hat{v}$ to the impulse $\lambda$.
Such velocity-impulse modeling has advantages in terms of the well-definedness of the problem (c.f., the Painleve paradox \cite{acary2011formulation}) and can naturally express behaviors like friction or elastic collisions. However, it may exhibit position-level drift, as it is based on linearization on the constraints.
Positional drift can be suppressed by adopting techniques such as multiple linearization, as in \cite{daviet2020simple}, or re-linearization \cite{verschoor2019efficient}, during the solution process. These methods may be considered for future implementation.

\subsubsection{Hard Constraint}

Hard constraints ensure that equations and inequalities for the system are strictly satisfied (e.g., joint limit), including holonomic and non-holonomic types. If the $i$-th constraint is hard, the corresponding relation is
\begin{align} \label{eq:hardcon}
    0 \le \lambda_i \perp J_i\hat{v} + e_i \ge 0
\end{align}
where $e_i\in\mathbb{R}$ and $J_i\in\mathbb{R}^{1\times n}$ denote the error and Jacobian for hard constraint.
Here, the error $e_i$ is typically scaled and biased value, from the methods such as Baumgarte stabilization \cite{baumgarte1972stabilization}, to prevent the constraint drift effectively.

\subsubsection{Soft Constraint}
Soft constraints are typically originated from the elastic potential energy of the system (e.g., spring).
If the $i$-th constraint is soft, constraint impulse can be written as
\begin{align} \label{eq:softcon}
    \lambda_i = -k_i e_i - b_i J_i\hat{v}
\end{align}
where $e_i\in\mathbb{R}$ and $J_i\in\mathbb{R}^{1\times n}$ are the error and Jacobian for soft constraint, $k_i,b_i>0$ are the gain and damping parameter which are scaled and biased dependent on the time integration scheme, step size, and constraint-space damping. 
The values of $k_i$ and $b_i$ are associated with the system energy behavior, see \cite{andrews2017geometric,kim2017haptic} for more details.

\subsubsection{Contact Constraint}

\begin{figure}[t]
\centering
\includegraphics[width=8cm]{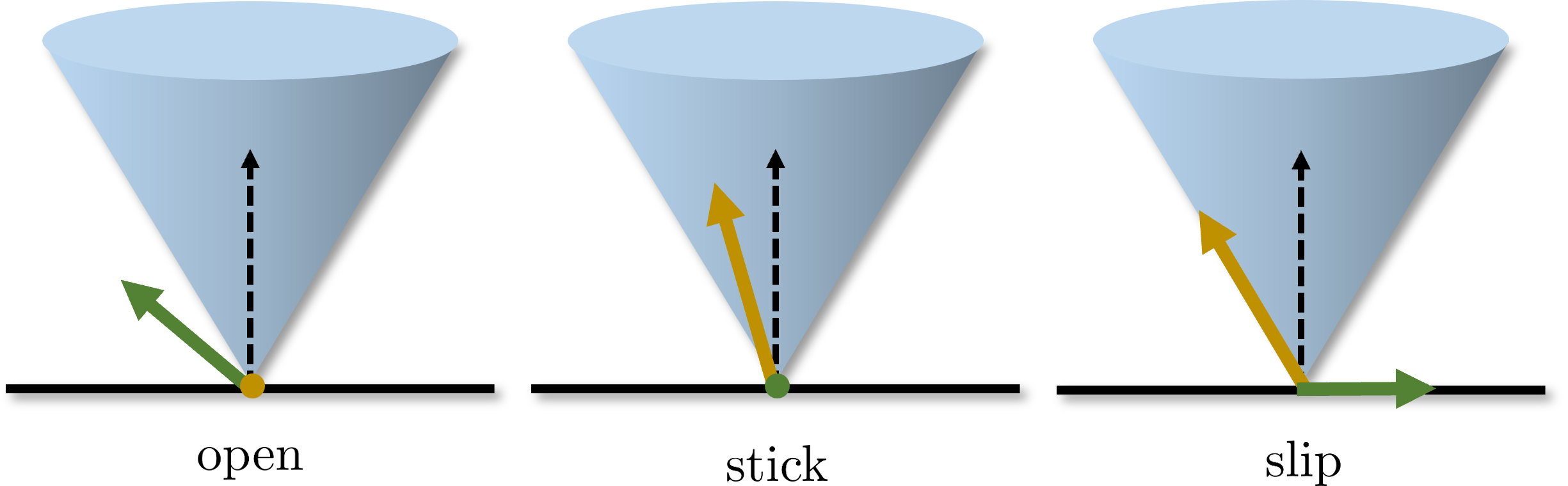}
\caption{Three cases resulting from the Signorini-Coulomb condition, ranging from open ($\lambda_{i,n}=0$), stick ($\lambda_{i,n}>0, \delta_i=0$), to slip ($\lambda_{i,n}>0, \delta_i>0$), shown from left to right. 
The blue shape illustrates the friction cone, the green arrow indicates the contact frame velocity, and the yellow arrow represents the contact impulse.}
\label{fig:signorini_coulomb}
\end{figure}

Contact condition is typically the most demanding type since it includes nonlinear complementarity relation between primal (i.e., velocity) and dual (i.e., impulse) variables.
In this article, we assume that at each time step, set of contact features (i.e., gap, contact point, normal) are provided from collision detection module \cite{pan2012fcl,lee2023differentiable}.
Then for each contact point, we define Signorini-Coulomb condition (SCC), which is the most universal expression for dry frictional contact.
If the $i$-th constraint is contact, the corresponding 3-DOF relation is
\begin{align} \label{eq:contactcon}
    \begin{split}
    & 0 \le \lambda_{i,n} \perp J_{i,n}\hat{v} + e_{i,n} \ge 0 \\
    & 0 \le \delta_i \perp \mu_i\lambda_{i,n} - \| \lambda_{i,t} \| \ge 0 \\
    &\delta_i \lambda_{i,t} + \mu_i\lambda_{i,n} J_{i,t}\hat{v} = 0
    \end{split}
\end{align} 
where $\perp$ denotes complementarity, $e_{i,n}\in\mathbb{R}$ and $J_{i,n}\in\mathbb{R}^{1\times n}$ denote the error and Jacobian for contact normal, $J_{i,t}\in\mathbb{R}^{2\times n}$ is the Jacobian for contact tangential, and $\mu_i$ is the friction coefficient and $\delta_i$ is the auxiliary variable.
The first condition, known as the velocity-level Signorini condition, captures the complementarity nature of the contact occurence and gap. 
The remaining conditions involve the complementarity between slipping velocity and the friction cone boundary, with the maximal dissipation law indicating that slip opposes the direction of impulse.
There are three situations induced by the condition \eqref{eq:contactcon} - open, stick, and slip, as depicted in Fig.~\ref{fig:signorini_coulomb}.

\subsection{Augmented Lagrangian Method}

By standard, augmented Lagrangian (AL) method is a class of algorithms to solve the following constrained optimization problem:
\begin{align*}
    \min_{x,z} f(x) + g(z) \quad \text{s.t.} \quad Px+Qz = r.
\end{align*}
Here, the augmented Lagrangian is defined as,
\begin{align*} 
    &\mathcal{L} = f(x) + g(z) + u^T(Px+Qz-r) + \frac{\beta}{2} \| Px+Qz - r\|^2
\end{align*}
where $u$ is the Lagrange multiplier and $\beta > 0$ is the penalty weight.
Then AL method takes the iteration step as
\begin{align} \label{eq:ALM}
\begin{split}
    (x^{l+1},z^{l+1}) &= \argmin_{x,z} \mathcal{L}(x,z,u^l) \\
    u^{l+1} &= u^l + \beta(Px^{l+1}+Qz^{l+1}-r)
\end{split}
\end{align}
where $l$ is the iteration index. 
In the above \eqref{eq:ALM}, $(x,z)$ are coupled for minimization problem at each step. 
Meanwhile, Alternating direction method of multiplier (ADMM \cite{boyd2011distributed}) iteratively performs alternating minimization of $\mathcal{L}$ with respect to each variable. The iteration process of ADMM can be summarized as follow:
\begin{align} \label{eq:ADMM}
\begin{split}
    &x^{l+1} = \argmin_{x} \mathcal{L}(x,z^l,u^l)  \\
    &z^{l+1} = \argmin_{z} \mathcal{L}(x^{l+1},z,u^l)  \\
    &u^{l+1} = u^l + \beta(Px^{l+1}+Qz^{l+1}-r)
\end{split}
\end{align}
where $l$ is the loop index. 
By independently resolving each variable, ADMM is often employed to enhance the efficiency and scalability of the application \cite{boyd2011distributed, wang2019admm}. 
In this work, we develop separate tailored algorithms based on the styles of \eqref{eq:ALM} and \eqref{eq:ADMM}.

\section{Multi-Contact Simulation via Augmented Lagrangian} \label{sec:multicontactAL}

\subsection{Problem Formulation}

The motivation for leveraging AL in contact simulation primarily stems from the insight to integrate tools from constrained optimization into the solving of constrained dynamics equation. 
The problem considered in this article, can be essentially formulated as 
\begin{align} \label{eq:prob_original}
\begin{split}
    &\text{Solve}~A\hat{v} = b + J^T\lambda \\
    &~~~~\text{s.t.}~~(J\hat{v},\lambda)\in \mathcal{S}_c
\end{split}
\end{align}
where $A\in\mathbb{R}^{n\times n}, b\in\mathbb{R}^n$ are the dynamics matrix/vector compressed from \eqref{eq:discrete_dyn}, and $\mathcal{S}_c$ represent the set that satisfies the relation between $J\hat{v}$ and $\lambda$ described in \eqref{eq:hardcon}, \eqref{eq:softcon}, and \eqref{eq:contactcon}. 
In robotic manipulation scenarios, contact points are often generated numerously and densely, which can lead to the resulting problem being ill-conditioned or infeasibly defined. 
In such cases, performing per-contact iteration based on the dual conversion $JA^{-1}J^T$ (i.e., so-called Delassus operator) often proves inefficient and exhibits slow convergence.
Meanwhile, AL in optimization is known for maintaining subproblem feasibility and demonstrating robust convergence, even converging to solutions with the least constraint violation in poorly defined problems \cite{dai2023augmented}.

Consequently, our primary objective is to investigate whether the advantages of the AL approach can be effectively applied to contact simulation. 
Although the problem \eqref{eq:prob_original} shares commonalities with optimization, it diverges due to the introduction of complementarity relations between primal and dual variables, particuarly associated with contact conditions. 
Our aim is to establish a foundation for deriving AL techniques specifically tailored to multi-contact, thereby addressing the unique challenges posed in robotic simulation.

\subsection{AL for Multi-Contact NCP} \label{subsec:AL_derivation}

In this subsection, we will derive augmented Lagrangian (AL) based methods to address multi-contact NCP in simulation.
We start by equivalently expressing \eqref{eq:prob_original} as follows:
\begin{align} \label{eq:prob_original_slack}
\begin{split}
    &\text{Solve}~A\hat{v} = b + J^T\lambda\\
    &~~~~\text{s.t.}~~(z,\lambda)\in \mathcal{S}_c,~J\hat{v} = z
\end{split}
\end{align}
where $z\in\mathbb{R}^{n_c}$ serves as the slack variable for the constraint interface. The expression in \eqref{eq:prob_original_slack} bears resemblance to the optimality condition of the following optimization problem:
\begin{align} \label{eq:prob_original_opt}
&\min_{\hat{v},z}~\frac{1}{2}\hat{v}^TA\hat{v} - b^T\hat{v} + g(z)~~\text{subject to}~~J\hat{v} = z
\end{align}
as the matrix $A$ is always symmetric positive definite. In this context, $g$ serves to enforce the constraint in dynamics, although $(z,\lambda)\in\mathcal{S}_c$ is not integrable into the function if the multi-contact condition included.
Recalling the structure of \eqref{eq:ALM} applicable to the optimization problem \eqref{eq:prob_original_opt}, we can similarly solve \eqref{eq:prob_original_slack} as follows:
\begin{align} 
\begin{split}
&\text{Solve }
\begin{bmatrix}
A+\beta J^TJ & -\beta J^T \\
-\beta J & \beta I
\end{bmatrix}
\begin{bmatrix}
\hat{v} \\ z
\end{bmatrix}
=
\begin{bmatrix}
b-J^Tu \\ u + \lambda
\end{bmatrix} \\
&~~~~\text{s.t.}~~
(z,\lambda)\in\mathcal{S}_c \label{eq:prob_surrogate}
\end{split}
\\
&u \leftarrow u + \beta(J\hat{v}-z). \label{eq:AL_multiplier_update}
\end{align}

The rationale of the above structure is that, at the fixed-point of the iteration (therefore, $J\hat{v}=z$), the result satisfies both dynamics equation and constraint relation.
Similar to \eqref{eq:ALM}, the process can be interpreted as iterating between solving the problem relaxed via a penalty term and updating the Lagrange multipliers. 
We refer this relaxed problem \eqref{eq:prob_surrogate} as the \textit{surrogate} problem.
However, unlike the minimization problem in \eqref{eq:ALM}, the solvability of the surrogate problem remains unclear, which may raise potential concerns. We address this issue in the following proposition.
\begin{proposition} \label{prop:surrogate_feasibility}
Surrogate problem \eqref{eq:prob_surrogate} always has a feasible solution. 
\end{proposition}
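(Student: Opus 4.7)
The plan is to eliminate the block linear system in the surrogate problem to reduce it to a regularized complementarity problem in the dual variable $\lambda$ alone, and then invoke classical existence results for such problems with positive definite matrices. The $(1/\beta)I$ regularization introduced by the AL construction is precisely what guarantees well-posedness, even in the ill-conditioned or over-specified regime that motivates this approach.

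First I would eliminate $\hat{v}$ and $z$ from \eqref{eq:prob_surrogate}. The second block row gives $z = J\hat{v} + (u+\lambda)/\beta$; substituting back into the first block row cancels the $\beta J^TJ$ term and leaves the ordinary dynamics relation $A\hat{v} = b + J^T\lambda$. Since $A$ is SPD one obtains $\hat{v} = A^{-1}(b + J^T\lambda)$, and consequently $z = M\lambda + q$ with $M := JA^{-1}J^T + (1/\beta)I$ and $q := JA^{-1}b + u/\beta$. The surrogate problem is thus equivalent to finding $\lambda$ such that $(M\lambda+q,\lambda)\in\mathcal{S}_c$. Moreover $M$ is SPD: the Delassus operator $JA^{-1}J^T$ is symmetric positive semi-definite, and adding $(1/\beta)I$ with $\beta>0$ makes $M$ strictly positive definite, turning a potentially singular Delassus operator into an invertible effective matrix.

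Next I would establish existence by splitting the reduced problem across the three constraint categories. Soft-constraint rows prescribe a linear relation between $\lambda_i$ and $z_i$ that can be substituted back into the dynamics to produce a modified SPD matrix $\tilde{A} = A + J_s^T B J_s$, reducing the analysis to the hard and contact cases. Hard-constraint rows yield an LCP with SPD effective matrix, which admits a unique solution by classical LCP theory (symmetric positive definite matrices are P-matrices). Contact-constraint rows yield the Signorini-Coulomb cone complementarity problem with SPD effective matrix, whose solvability is the delicate step.

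The main obstacle is the Signorini-Coulomb case, since unlike an LCP it does not derive from a convex optimization problem because of the maximal dissipation coupling between normal and tangential impulses. The standard remedy I would invoke is a fixed-point argument on the normal impulses: for each fixed $\lambda_{i,n}$ the tangential subproblem reduces to a projection onto the disk of radius $\mu_i \lambda_{i,n}$ with respect to the SPD effective matrix, which is uniquely solvable and continuous in $\lambda_{i,n}$; the induced map sending normal impulses to their Signorini-projected update leaves a compact convex set invariant, the bound coming from the $(1/\beta)I$ regularization, so Brouwer's theorem yields a fixed point that provides the desired $\lambda$, completing the proof.
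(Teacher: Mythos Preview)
Your reduction is correct and your approach is essentially the same as the paper's, only more explicit. The paper's proof is two sentences: it observes that in the surrogate problem the constraint acts on the slack variable $z$, so the effective contact Jacobian is the identity matrix, whose rows are trivially linearly independent; it then cites the Brouwer-based existence result of Acary~\cite{acary2011formulation}, which requires exactly this independence condition. Your elimination to $z = M\lambda + q$ with $M = JA^{-1}J^T + (1/\beta)I$ SPD is the same observation phrased on the dual side: linear independence of Jacobian rows is equivalent to positive definiteness of the Delassus operator, and here the $(1/\beta)I$ term delivers it unconditionally. You then sketch the Brouwer fixed-point argument yourself rather than citing it, and additionally handle the soft and hard constraint types explicitly, which the paper leaves implicit.

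One minor looseness worth tightening: the hard and contact blocks are coupled through the off-diagonal blocks of $M$, so they cannot be dispatched independently as your phrasing suggests. The clean fix is to fold the hard constraints into the Brouwer map (treat them as frictionless contacts, or solve the joint tangential-plus-hard QP for each fixed vector of normal impulses); positive definiteness of $M$ still guarantees uniqueness and continuity of that inner solve, and the compact-invariant-set bound still follows from the $(1/\beta)I$ regularization. With that adjustment your argument is complete and matches the paper's in substance.
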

\begin{proof}
    The proof can be done by borrowing the existence proof from \cite{acary2011formulation} based on the Brouwer fixed-point theorem, which states that a solution to the Coulomb friction problem always exists if the rows of the contact Jacobian are linearly independent. Due to the slack variable $z$, the contact Jacobian in \eqref{eq:prob_surrogate} can be simply considered as an identity matrix, satisfying this condition.
\end{proof}

Given that the surrogate problem \eqref{eq:prob_surrogate} has a feasible solution, the numerical scheme used to find this solution becomes significant as its performance directly affects the overall efficiency and accuracy of the AL methods for multi-contact NCP.

\subsection{Closed-Form Formulation of Slack Variables}
\label{subsec:closed_form_slack}

Compared to the original problem \eqref{eq:prob_original}, the surrogate problem \eqref{eq:prob_surrogate} should be easier to solve in order to maintain the rationality of the framework. 
A crucial difference between \eqref{eq:prob_original} and \eqref{eq:prob_surrogate} is that the constraint condition is defined on the slack variable $z$ as shown below:
\begin{align} \label{eq:rel_z_lambda}
\beta z = \beta J\hat{v} + u + \lambda, \quad \text{s.t.} \quad (z,\lambda) \in \mathcal{S}_c.
\end{align}
This implies that the relationship between $z$ and $\lambda$ is matrix-free and involves only a simple scalar weight $\beta$.
Based on this feature, we can derive the \textit{closed-form} representation for $\lambda$ (therefore, also for $z$) with respect to $\hat{v}$ for each hard, soft, and contact constraint. 
The derivations are listed as below.

If the $i$-th constraint is hard, we can determine $\lambda_i$ by substituting \eqref{eq:rel_z_lambda} into the complementarity relation \eqref{eq:hardcon}: 
\begin{align} \label{eq:closedform_hard}
\begin{split}
    &\lambda_i = \Pi_{\ge 0}\left (-\beta J_i\hat{v}-u_i-\beta e_i \right) 
\end{split}
\end{align}
where $\Pi_{\ge 0}$ denotes the projection on positive set.
Meanwhile if the $i$-th constraint is soft, we can determine $\lambda_i$ by substituting \eqref{eq:rel_z_lambda} into the linear relation \eqref{eq:softcon}: 
\begin{align} \label{eq:closedform_soft}
\begin{split}
    &\lambda_i = -\frac{b_i(\beta J_i\hat{v}+u_i) + \beta k_i e_i}{b_i+\beta}
\end{split}
\end{align}
Finally, if the $i$-th constraint represents a contact, we can determine $\lambda_i$ by substituting \eqref{eq:rel_z_lambda} into the contact condition described in \eqref{eq:contactcon}:
\begin{align} \label{eq:closedform_contact}
&\lambda_i = \Pi_{\mathcal{C}}^\text{strict} \left(-\beta J_i\hat{v}-u_i-\beta e_i \right)
\end{align}
where $\Pi_{\mathcal{C}}$ denotes the projection onto the friction cone $\mathcal{C}$. 
Specifically, the projection $\lambda_i = \Pi_{\mathcal{C}}^\text{strict}\left(\lambda_i^*\right)$ is carried out by the following steps:
\begin{align} \label{eq:strict_operator}
\begin{split}
    \lambda_{i,n} &= \max(\lambda_{i,n}^{*},0) \\ 
    \lambda_{i,t} &= \Pi_{\mathcal{C}(\lambda_{i,n})}(\lambda_{i,t}^{*}). 
\end{split}
\end{align}
Here, $\mathcal{C}(\lambda_{i,n})$ represents the cross-section of $\mathcal{C}$ where the plane at height $\lambda_{i,n}$ intersects the cone.
This nested projection is distinct from the closest distance projection, commonly known as the proximal operator when applied to the indicator function of the friction cone \cite{parikh2014proximal}. 
See Fig.~\ref{fig:strict_proximal} for illustrations of each projection scheme.
As in \cite{lee2022large}, we refer to \eqref{eq:strict_operator} as the strict operator as the resulting $(z_i,\lambda_i)$ strictly satisfies the Signorini-Coulomb condition. This property is demonstrated in the proposition below.
\begin{proposition} \label{prop:strict_scc}
If the $i$-th constraint represents a contact, \eqref{eq:closedform_contact} with \eqref{eq:strict_operator} gives the unique solution of \eqref{eq:rel_z_lambda}.
\end{proposition}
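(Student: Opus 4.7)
The plan is to exploit the key algebraic simplification available in the surrogate setting: defining $\lambda_i^{*} := -\beta J_i\hat{v}-u_i-\beta e_i$, the slack relation \eqref{eq:rel_z_lambda} rearranges to $\beta(z_i+e_i)=\lambda_i-\lambda_i^{*}$ (with the Baumgarte offset $e_i$ supported on the normal component as is standard). This eliminates the slack $z_i$ from the Signorini--Coulomb condition \eqref{eq:contactcon} and reduces the problem to a purely pointwise algebraic relation between $\lambda_i$ and the known vector $\lambda_i^{*}$. I would then verify existence by direct substitution of $\lambda_i=\Pi_{\mathcal{C}}^{\text{strict}}(\lambda_i^{*})$ into the three lines of \eqref{eq:contactcon} with an appropriate choice of the auxiliary scalar $\delta_i$, and prove uniqueness by a case analysis on the active SCC branch (open, stick, slip).

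For the normal component the first line of \eqref{eq:contactcon} collapses to the scalar complementarity $0\le\lambda_{i,n}\perp\lambda_{i,n}-\lambda_{i,n}^{*}\ge 0$; splitting on the sign of $\lambda_{i,n}^{*}$ forces $\lambda_{i,n}=\max(\lambda_{i,n}^{*},0)$, matching the first step of \eqref{eq:strict_operator}. For the tangential component, substituting $z_{i,t}=(\lambda_{i,t}-\lambda_{i,t}^{*})/\beta$ into the maximum-dissipation equality yields $(\beta\delta_i+\mu_i\lambda_{i,n})\lambda_{i,t}=\mu_i\lambda_{i,n}\lambda_{i,t}^{*}$. If $\lambda_{i,n}=0$ the friction disk collapses to the origin and $\lambda_{i,t}=0$. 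Otherwise, a case split based on whether $\|\lambda_{i,t}^{*}\|$ exceeds $\mu_i\lambda_{i,n}$ uniquely selects either the stick branch ($\delta_i=0$ and $\lambda_{i,t}=\lambda_{i,t}^{*}$ strictly inside the disk) or the slip branch ($\|\lambda_{i,t}\|=\mu_i\lambda_{i,n}$ with $\lambda_{i,t}$ a positive multiple of $\lambda_{i,t}^{*}$), which is precisely the nested projection $\Pi_{\mathcal{C}(\lambda_{i,n})}(\lambda_{i,t}^{*})$ of the second step of \eqref{eq:strict_operator}.

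The main obstacle I anticipate is the slip branch, where one has to produce simultaneously a non-negative auxiliary scalar $\delta_i$ and a $\lambda_{i,t}$ on the cone boundary that is aligned with (not opposite to) $\lambda_{i,t}^{*}$. Taking norms of $(\beta\delta_i+\mu_i\lambda_{i,n})\lambda_{i,t}=\mu_i\lambda_{i,n}\lambda_{i,t}^{*}$ pins down $\delta_i=(\|\lambda_{i,t}^{*}\|-\mu_i\lambda_{i,n})/\beta$, which is non-negative exactly in the slip regime, and the positivity of the left scalar coefficient forces the correct orientation of $\lambda_{i,t}$ relative to $\lambda_{i,t}^{*}$. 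This sign selection is exactly what distinguishes $\Pi_{\mathcal{C}}^{\text{strict}}$ from the closest-point proximal projection onto the cone (cf.\ Fig.~\ref{fig:strict_proximal}) and also explains why the normal-first/tangential-second ordering of \eqref{eq:strict_operator} cannot be relaxed without breaking uniqueness.
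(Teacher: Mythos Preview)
Your proposal is correct and follows essentially the same route as the paper: eliminate $z_i$ via $\beta(z_i+e_i)=\lambda_i-\lambda_i^{*}$, resolve the decoupled normal complementarity first to obtain $\lambda_{i,n}=\max(\lambda_{i,n}^{*},0)$, then substitute into the maximum-dissipation line to derive $(\beta\delta_i+\mu_i\lambda_{i,n})\lambda_{i,t}=\mu_i\lambda_{i,n}\lambda_{i,t}^{*}$ and case-split on stick versus slip. If anything, your write-up is slightly more complete than the paper's, since you explicitly treat the degenerate case $\lambda_{i,n}=0$, compute $\delta_i=(\|\lambda_{i,t}^{*}\|-\mu_i\lambda_{i,n})/\beta$ in the slip branch, and argue the sign of the scalar coefficient to fix the orientation of $\lambda_{i,t}$.
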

\begin{proof}
As the normal component is completely decoupled from the tangential component in \eqref{eq:rel_z_lambda}, it can be written as 
\begin{align*}
\begin{split}
    \beta (z_{i,n} + e_{i,n}) &= \lambda_{i,n} + \beta J_{i,n}\hat{v} + u_{i,n} + \beta e_{i,n} \\   
    &= \lambda_{i,n} - \lambda_{i,n}^*
\end{split}
\end{align*}
Then, if $\lambda_{i,n}^{*}>0$, $\lambda_{i,n}=\lambda_{i,n}^{*}$ is the only solution for which $z_{i,n}+e_{i,n}=0$ is satisfied. 
Otherwise, if $\lambda_{i,n}=0$, it is the only solution since $z_{i,n}+e_{i,n}>0$.
Therefore, the normal components are uniquely determined, satisfying the complementarity condition.
For the tangential components, the relation can be written as
\begin{align*}
    \beta z_{i,t} = \lambda_{i,t} - \lambda_{i,t}^*
\end{align*}
Substituting the above equation into \eqref{eq:contactcon}, we obtain
\begin{align*}
    (\beta\delta_i + \mu_i\lambda_{i,n})\lambda_{i,t} = \mu_i\lambda_{i,n}\lambda_{i,t}^*
\end{align*} 
If $\lambda_{i,t}$ lies inside $\mathcal{C}(\lambda_{i,n})$, $\mu\lambda_{i,n}-\|\lambda_{i,t}\|>0$ holds and $\delta_i$ should be $0$.
If $\lambda_{i,t}$ lies outside $\mathcal{C}(\lambda_{i,n})$, $\delta_i$ should be larger than $0$, yet should satisfy $\mu\lambda_{i,n}=\|\lambda_{i,t}\|$, therefore it is uniquely determined.
Finally, the resulting $\lambda_{t,i}$ is equivalent to the result of the strict operator, thus the statement holds.
\end{proof}

The results \eqref{eq:closedform_hard}, \eqref{eq:closedform_soft}, and \eqref{eq:closedform_contact} derived above imply that $\lambda_i$ can be expressed as a closed-form regardless of constraint type, allowing us to write it as:
\begin{align} \label{eq:closedform_operator}
\lambda_i = T(\lambda_i^*) \quad \text{where} \quad \lambda_i^* = -\beta J_i\hat{v}-u_i -\beta e_i
\end{align}
where $T$ is a closed-form operator which is continuous yet may nonsmooth depending on the constraint type.
Accordingly, by the linear relation \eqref{eq:rel_z_lambda}, the slack variable $z$ is also expressed in closed-form with respect to $\hat{v}$.

\begin{figure}[t]
\centering
\includegraphics[width=6cm]{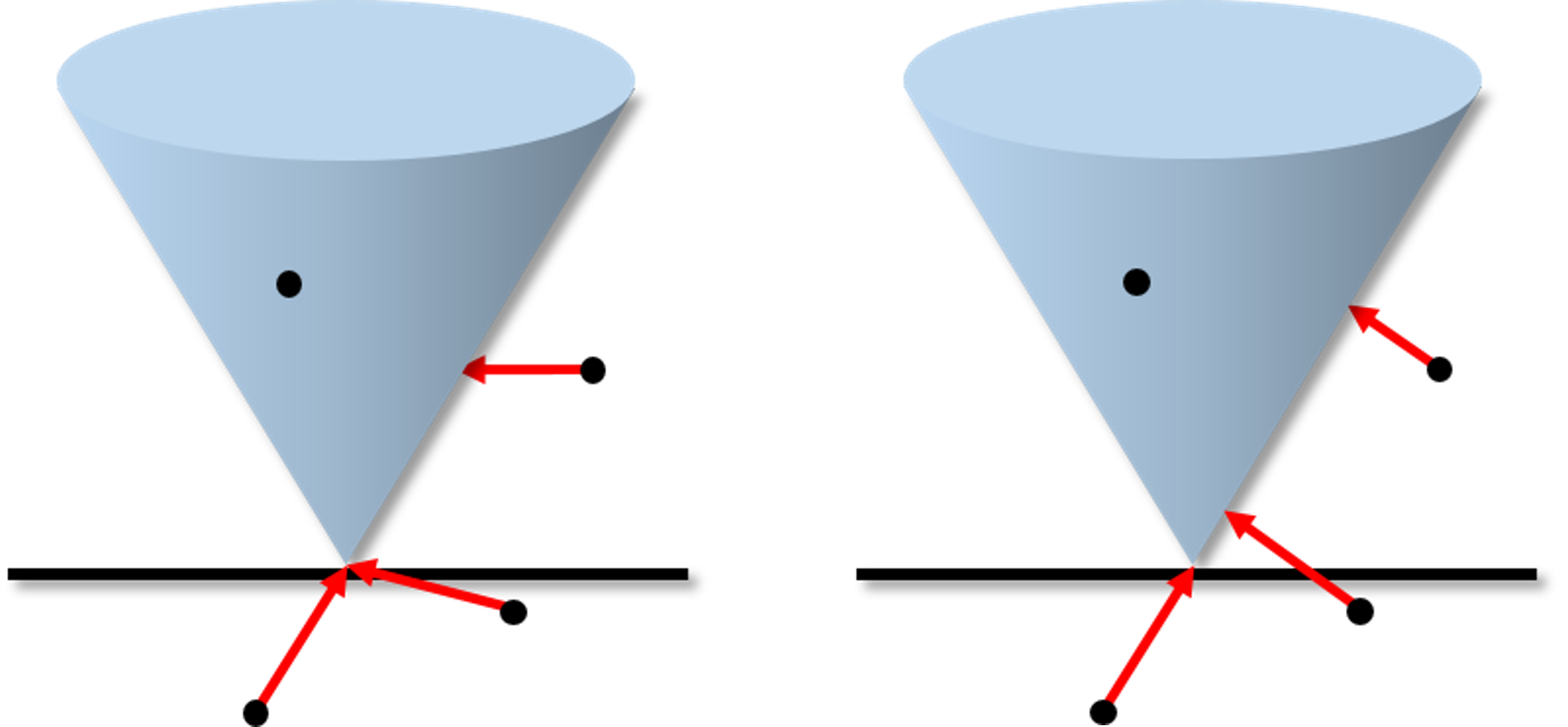}
\caption{Comparison of the strict operator (left) and the proximal operator (right) for the friction cone projection. Black dot: operator input; red arrow: projection direction.}
\label{fig:strict_proximal}
\end{figure}

Based on the closed-form operation \eqref{eq:closedform_operator}, solving \eqref{eq:prob_surrogate} can be now expressed as solving following nonlinear equation:
\begin{align} \label{eq:semismooth_eq}
\begin{split}
r(\hat{v})
&= A\hat{v} - b - \sum_i J_i^T \lambda_i \\
&= A\hat{v} - b - \sum_i J_i^T T(-\beta J_i\hat{v}-u_i-\beta e_i)
\end{split}
\end{align}
then computing $z=J\hat{v}+\frac{1}{\beta}(u+\lambda)$ accordingly.
Due to the projection operator \eqref{eq:strict_operator}, $r:\mathbb{R}^n\rightarrow\mathbb{R}^n$ is a continuous, yet semismooth equation.
Therefore, one can handle the surrogate problem by solving this nonlinear equation \eqref{eq:semismooth_eq} using the Newton method, whose theories developed under semismooth case \cite{hintermuller2010semismooth} by employing the generalized derivatives. 

However, typical (semismooth) Newton methods are known to exhibit superlinear convergence near the solution but lack robustness. 
Although the Prop.~\ref{prop:surrogate_feasibility} ensures the existence of solution and we attempt with various globalization techniques based on backtracking/edge-aware line-search and trust-region methods, we found that none provided sufficient robustness. 
This is critical considering that in physics simulation, as numerous iterations are required at each time step, and even a single failure can lead to significant consequences.
Furthermore, the derivative of the closed-form operator \eqref{eq:closedform_operator} might become non-symmetric in contact cases, and cannot guarantee that $\frac{dr}{d\hat{v}}$ will always be non-singular. This issue makes the computation both expensive and unreliable. 
Consequently, we have developed two variations of the augmented Lagrangian tailored for the multi-contact NCP form \eqref{eq:prob_original_slack}: the cascaded Newton-based augmented Lagrangian method (CANAL) and the subsystem-based ADMM (SubADMM), which are presented in the following sections.

\section{Cascaded Newton-Based Augmented Lagrangian} \label{sec:canal}

\subsection{Cascaded Structure}

A crucial issue of the Newton-based solution of \eqref{eq:semismooth_eq} is that the landscape of the merit function $\frac{1}{2}\| r(\hat{v})\|^2$ is non-convex.
Our core strategy to address this issue employs a cascaded method that relaxes each surrogate problem into a convex form, facilitating fast and stable solutions, while updating terms at each AL step to compensate for discrepancies between the convex problem and the original NCP.
For the convex relation, we utilize the equivalence of $(z,\lambda) \in \mathcal{S}_c$ and \eqref{eq:contactcon} with the following condition:
\begin{align} \label{eq:contactcon_equiv}
    \mathcal{C} \ni \lambda_i \perp z_i + \underbrace{\begin{bmatrix}
     0 \\ 0 \\ \mu_i \| z_{i,t} \|   
    \end{bmatrix}}_{p_i} \in \mathcal{C}^* 
\end{align}
where $\mathcal{C}^*$ denotes the dual cone of \(\mathcal{C}\). 
This equivalence can be easily verified, as we refer \cite{acary2011formulation} for details. 
The reformulated relation in \eqref{eq:contactcon_equiv} essentially constitutes a cone complementarity condition, if the perturbation term $p_i$ is excluded.

A key idea of our cascaded Newton approach is to substitute the perturbation term $p_i$ by borrowing $z_i$ from the previous AL iteration. 
In other words, we treat $p_i$ as a constant in every surrogate problem, and temporarily consider the relationship between $z_i$ and $\lambda_i$ as a cone complementarity condition. 
Consequently, in the $(l+1)$-th AL iteration, we solve the following nonlinear equation that replaces the strict operator \eqref{eq:closedform_contact} with the proximal operator:
\begin{align} \label{eq:closedform_contact_prox}
\begin{split}
    r(\hat{v}^{l+1}) &= A\hat{v}^{l+1}-b -\sum_i J_i^T\lambda_i^{l+1} \\
    \lambda_i^{l+1} &= \Pi_{\mathcal{C}}^\text{prox} (\underbrace{-\beta J_i\hat{v}^{l+1}-u_i^l-\beta\tilde{e}_i^l}_{\tilde{\lambda}_i^*}) 
\end{split}
\end{align}
where $\tilde{e}_i^l = e_i + p_i^l = e_i + \left[0 \; 0 \; \mu_i \| z_{i,t}^l \|\right]^T$.
Even after this replacement, the nonlinear equation in \eqref{eq:closedform_contact_prox} remains semismooth. However, we can demonstrate that it is integrable, as detailed in the following proposition. Note that to streamline the explanation, we will focus exclusively on the contact constraints below, as the other types (i.e., hard and soft) follow straightforwardly.
\begin{proposition}
    The function $r(\hat{v})$ from \eqref{eq:closedform_contact_prox} is the derivative of the following strongly-convex function:
    \begin{align} \label{eq:canal_convex_objective}
    h(\hat{v}) = \frac{1}{2} \hat{v}^T A \hat{v} - b^T \hat{v} + \sum_i \frac{1}{2\beta} \| \lambda_i \|^2
    \end{align}
\end{proposition}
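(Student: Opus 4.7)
The plan is to compute $\nabla h(\hat v)$ directly and verify term-by-term that it coincides with $r(\hat v)$, then read off strong convexity from the structure of $h$. Since the quadratic head $\tfrac{1}{2}\hat v^{T} A\hat v - b^{T}\hat v$ is smooth, the work concentrates on differentiating $\tfrac{1}{2\beta}\|\lambda_i\|^{2} = \tfrac{1}{2\beta}\|\Pi_{\mathcal{C}}^{\text{prox}}(\tilde\lambda_i^{*}(\hat v))\|^{2}$, which inherits nonsmoothness from the projection.

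The key identity is that, although $\Pi_{\mathcal{C}}^{\text{prox}}$ is only Lipschitz, the scalar $y \mapsto \tfrac{1}{2}\|\Pi_{\mathcal{C}}^{\text{prox}}(y)\|^{2}$ is $C^{1}$ with gradient $\Pi_{\mathcal{C}}^{\text{prox}}(y)$. I would derive this through the Moreau decomposition: using the orthogonal split $y = \Pi_{\mathcal{C}}^{\text{prox}}(y) + \Pi_{-\mathcal{C}^{*}}^{\text{prox}}(y)$ and squaring gives $\tfrac{1}{2}\|\Pi_{\mathcal{C}}^{\text{prox}}(y)\|^{2} = \tfrac{1}{2}\|y\|^{2} - \tfrac{1}{2}\|y-\Pi_{\mathcal{C}}^{\text{prox}}(y)\|^{2}$, whose second term is the squared distance to a closed convex set—a classical $C^{1}$ Moreau envelope with gradient $y - \Pi_{\mathcal{C}}^{\text{prox}}(y)$. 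Subtracting yields the claimed identity without ever differentiating $\Pi_{\mathcal{C}}^{\text{prox}}$ itself.

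With $\tilde\lambda_i^{*}(\hat v) = -\beta J_i\hat v - u_i^{l} - \beta\tilde e_i^{l}$ and Jacobian $-\beta J_i$, the chain rule then gives
\begin{equation*}
\frac{d}{d\hat v}\!\left[\frac{1}{2\beta}\|\lambda_i\|^{2}\right] = (-\beta J_i)^{T} \cdot \frac{1}{\beta}\lambda_i = -J_i^{T}\lambda_i,
\end{equation*}
so summing over $i$ and adding the gradient of the quadratic head recovers $\nabla h(\hat v) = A\hat v - b - \sum_i J_i^{T}\lambda_i = r(\hat v)$. Strong convexity then follows because each per-contact term is convex (its gradient is the monotone projection onto a closed convex set, and affine precomposition preserves convexity), so the strong convexity modulus is inherited intact from $A \succ 0$ in the quadratic head.

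The main obstacle is the first step. A direct chain-rule attack would require differentiating $\Pi_{\mathcal{C}}^{\text{prox}}$ itself and invoking a generalized identity such as $D\Pi_{\mathcal{C}}^{\text{prox}}(y)\,\Pi_{\mathcal{C}}^{\text{prox}}(y) = \Pi_{\mathcal{C}}^{\text{prox}}(y)$, accompanied by a stratum-by-stratum analysis of open/stick/slip cases and symmetry of the generalized Jacobian. Routing the argument through the Moreau-envelope identity bypasses all of this, because the squared distance to a convex set is globally $C^{1}$ even where the projection itself is not differentiable, leaving the rest of the argument as bookkeeping.
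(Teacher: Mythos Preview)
Your proof is correct and takes a genuinely different route from the paper's. The paper differentiates $\tfrac{1}{2\beta}\|\lambda_i\|^2$ by formally chain-ruling through the projection to get $-J_i^{T}\bigl(\tfrac{d\lambda_i}{d\tilde\lambda_i^{*}}\bigr)^{T}\lambda_i$, then collapses $\bigl(\tfrac{d\lambda_i}{d\tilde\lambda_i^{*}}\bigr)^{T}\lambda_i$ to $\lambda_i$ by citing the cone-projection orthogonality $\lambda_i^{T}(\lambda_i-\tilde\lambda_i^{*})=0$; this is exactly the ``direct chain-rule attack'' you anticipated and chose to avoid, and it leaves the semismooth boundary points implicit. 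Your Moreau-envelope route, writing $\tfrac{1}{2}\|\Pi_{\mathcal C}(y)\|^{2}=\tfrac{1}{2}\|y\|^{2}-\tfrac{1}{2}\operatorname{dist}(y,\mathcal C)^{2}$, delivers global $C^{1}$ regularity without ever touching $D\Pi_{\mathcal C}$. For convexity, the paper states that ``the squared distance to a convex set is convex''---which is apt once one notes that $\|\Pi_{\mathcal C}(y)\|^{2}=\operatorname{dist}(y,\mathcal C^{\circ})^{2}$ for a cone, though the paper does not name the set---while you argue via monotonicity of the projection. Both proofs are short; yours buys rigor at the nonsmooth interfaces, the paper's buys brevity by working inside the smooth strata and invoking the cone identity.
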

\begin{proof}
    The derivative of $h(\hat{v})$ can be expressed as:
    \begin{align*}
        \frac{dh(\hat{v})}{d\hat{v}} 
        &= A \hat{v} - b - \sum_i J_i^T \frac{d\lambda_i}{d\lambda_i^*}^T \lambda_i \\
        &= A \hat{v} - b - \sum_i J_i^T \lambda_i
    \end{align*}
    The latter equality holds due to the identity $\lambda_i^T(\lambda_i - \lambda_i^*) = 0$ in the proximal operator. 
    The symmetric positive-definite property of $A$ ensures that the quadratic term is strongly convex.
    Furthermore, since the squared distance to a convex set is convex, $\| \lambda_i \|^2$ is convex with respect to $\lambda_i^*$, and thus also for $\hat{v}$.
    Therefore, $h(\hat{v})$ is a strongly-convex function.
\end{proof}

This result is closely related to those presented in \cite{todorov2014convex, castro2022unconstrained}, although the objective function is defined differently based on our AL-based formulation. 
Given this property, we can apply the exact Newton method to the strongly-convex function \eqref{eq:canal_convex_objective} by computing the derivative of $r(\hat{v})$ (i.e., the Hessian), which is proven to exhibit global convergence \cite{nocedal1999numerical}.

\subsection{Newton Step}

Computing the derivative of $r(\hat{v})$ in \eqref{eq:closedform_contact_prox} with respect to $\hat{v}$ is straightforward, except for the part involving $T$.  
As the operator $T$ is a proximal operator on a friction cone (see Fig.~\ref{fig:strict_proximal}), it involves a continuous concatenation of three formulaic forms, yet the function is semismooth at the connection points. 
Below, we provide derivative of each form which can be obtained from a few algebraic calculation:
\begin{align} \label{eq:canal_hessian_33}
    &\frac{d\lambda_i}{d\tilde{\lambda}_i^*} = 
    \begin{cases}
        0_{3\times 3}, & \mbox{open} \\
        I_{3\times 3}, & \mbox{stick} \\
        \frac{1}{\mu^2+1}
        \begin{bmatrix}
            \mu_i^2 I_{2\times 2} + \frac{\mu_i\tilde{\lambda}_{i,n}^*}{\|\tilde{\lambda}_{i,t}^*\|}P(\bar{\lambda}_{i,t})  & \mu_i\bar{\lambda}_{i,t}^T  \\
            \mu_i\bar{\lambda}_{i,t} & 1
        \end{bmatrix}, & \mbox{slip}
    \end{cases}
\end{align}
where $\bar{\lambda}_{i,t}^*$ is the normalized vector of $\tilde{\lambda}_{i,t}^*$ and $P(\bar{\lambda}_{i,t}) = I - \bar{\lambda}_{i,t}\bar{\lambda}_{i,t}^T$ is the tangential projection matrix.
Then the derivative can be written as
\begin{align} \label{eq:canal_hessian}
    \frac{dr(\hat{v})}{d\hat{v}} = A + \sum_i \beta J_i^T\frac{d\lambda_i}{d\tilde{\lambda}_i^*} J_i.
\end{align}
Due to the structure given in \eqref{eq:canal_hessian_33}, and consequently the matrix \eqref{eq:canal_hessian}, is guaranteed to be symmetric positive definite, therefore always invertible.
Followingly, the direction of the Newton step is computed as
\begin{align} \label{eq:canal_newtonstep}
    d(\hat{v}) = -\left(\frac{dr(\hat{v})}{d\hat{v}}\right)^{-1} r(\hat{v})    
\end{align}
where the $d(\hat{v})$ denotes the direction of $\hat{v}$ update.

Computation of the step \eqref{eq:canal_newtonstep} requires the linear solving of \eqref{eq:canal_hessian}, therefore assemble and factorization of the matrix is necessary.
For better efficiency, we can exploit sparsity pattern of the inertia matrix and the constraint Jacobian during the process.

\subsection{Exact Line-Search}

Drawing from well-known convex optimization theory \cite{nocedal1999numerical}, we can guarantee that \eqref{eq:canal_newtonstep} provides a descent direction. 
However, we still need to integrate a suitable line-search scheme to ensure global convergence. 
Here, the line-search problem can be described as following one-dimensional, strictly convex optimization problem:
\begin{align} \label{eq:canal_linesearch}
    \min_{\alpha > 0} f(\hat{v}+\alpha d(\hat{v})).
\end{align}   
Similar to \cite{castro2022unconstrained}, we can find a globally optimal solution of the problem \eqref{eq:canal_linesearch} using the \texttt{rtsafe} algorithm, which effectively combines the one-dimensional Newton-Raphson method and a bisection scheme.
In practice, we find that the Newton step, when combined with the aforementioned exact line-search, performs robustly even with large values of $\beta$. 
This approach significantly enhances the robustness of the simulator, as the standard semismooth Newton method on \eqref{eq:semismooth_eq} often leads to failures, especially for large $\beta$. The effectiveness is attributable to the integration of our cascaded scheme and the well-established theories in convex optimization literature.

\subsection{Warm-Start and Penalty Parameter Update} \label{subsec:canal_warmstart_penalty}

At each Newton loop, we can warm-start $\hat{v}$ from the value of the previous CANAL loop. This effectively reduces the number of necessary Newton steps in practice, as the optimal solution of the inner convex optimization should be similar as the AL iteration converges. Typically, with warm starting, we find that one or two Newton iterations often suffice after some progress has been made in the CANAL iteration.
Therefore, the computational cost per iteration step tends to decrease. 
For the first iteration, we can warm-start $\hat{v}$ and $u$ (and therefore also $z$) by using the values from the previous time step. 

The penalty parameter $\beta$ plays a crucial role in the CANAL algorithm. Typically, a high value of $\beta$ improves the convergence of the residual $\| J\hat{v} - z \|$ to zero. However, it also makes the convex problem numerically stiff, thus requiring additional Newton iterations to solve. 
Consequently, we begin with a moderate value of $\beta$ ($10^4$ in our cases) and increase it if the residual value is not sufficiently reduced. The update rule for increasing $\beta$ is defined as follows:
\begin{align} \label{eq:canal_penaltyupdate}
\beta \leftarrow \min(\kappa\beta, \beta^{\max}),
\end{align}
where $\kappa > 1$ is the hyperparameter. 
This rule includes restriction of $\beta$ from becoming unnecessarily large, thus bounding the stiffness of \eqref{eq:canal_penaltyupdate} to circumvent numerical instability.
Note that, if we perform only a single iteration on CANAL, it is equivalent to solving a soft convex formulation of contact as in \cite{mujoco,drake}. 
In this regard, CANAL can be considered as their extension, refining approximations and converging to near-rigid behavior through the update of primal-dual variables.
In practice, we find that it takes only a few iterations to achieve plausible behavior in simulations, and after several iterations, it tends to converge to very high accuracy.
The overall CANAL algorithm is summarized in Alg.~\ref{alg:canal}.

\begin{algorithm}[!t]
\SetAlgoLined
\caption{Multi-Contact Simulation via CANAL} 
\label{alg:canal}
\While{simulation}{
initialize $l=0,\hat{v}^0, z^0, \beta>0,\kappa>1,0<\eta<1$  \\
\While{CANAL loop}
{
initialize $\hat{v}^{l+1} \leftarrow \hat{v}^l$ \\
compute $\tilde{e}$ based on $z^l$ \\
\While{Newton loop}
{
compute $r(\hat{v}^{l+1})$ \eqref{eq:closedform_contact_prox} \\ 
\If{$\| r(\hat{v}^{l+1}) \| < \theta_{th}^N$}
{
\textbf{break}
} 
compute Newton step $d(\hat{v}^{l+1})$ \eqref{eq:canal_newtonstep} \\
compute $\alpha$ via exact line-search \eqref{eq:canal_linesearch} \\
$\hat{v}^{l+1} \leftarrow \hat{v}^{l+1} + \alpha d(\hat{v}^{l+1})$
}
update $z^{l+1}$ and multiplier $u^{l+1}$ \eqref{eq:AL_multiplier_update} \\
\eIf{$\|J\hat{v}^{l+1}-z^{l+1}\| < \theta_{th}^{AL}$} 
{
\textbf{break}
}
{
\If{$\|J\hat{v}^{l+1}-z^{l+1}\| > \zeta\|J\hat{v}^{l}-z^{l}\|$} 
{
update $\beta$ \eqref{eq:canal_penaltyupdate} \\
}
}
$l \leftarrow l+1$
}
update system state using $\hat{v}^{l+1}$
}
\end{algorithm}

\section{Subsystem-Based Alternating Direction Method of Multiplier}
\label{sec:subadmm}

While the CANAL-based multi-contact simulation described in Sec.~\ref{sec:canal} exhibits fast convergence and stable constraint handling in practice, its scalability may be limited by the need to compute at least one Newton step for each AL iteration. 
Although we fully exploit the sparsity pattern, the Hessian matrix may become fully dense in the worst-case scenarios (e.g., long kinematic chains, dense coupling), thereby significantly increasing the complexity of the factorization process.
Consequently, this approach can become computationally expensive when dealing with large-DOF multibody systems that include numerous objects.

One reasonable option in this regard is to adopt the methodology of ADMM, which, instead of solving the coupled problem of $(\hat{v}, z)$, performs alternating computation for each $\hat{v}$ and $z$.
By employing this alternating approach, the problem can be decomposed into a closed-form operator for $z$ from \eqref{eq:closedform_operator} and a linear problem for $\hat{v}$ from \eqref{eq:prob_surrogate}:
\begin{align} \label{eq:vanilla_admm_linear}
    (A+\beta J^TJ)\hat{v} = b + J^T(\beta z - u)
\end{align}
Although this vanilla ADMM allows matrix factorization to be performed only once for each time step, its computational efficiency diminishes with increasing system size, and also the sparsity pattern of the matrix in \eqref{eq:vanilla_admm_linear} is same with the Hessian matrices used in CANAL. 
Additionally, in practice, ADMM often necessitates tuning of the parameter $\beta$ during iterations to achieve optimal performance, which may require re-factorization of the matrix.
Hence, we introduce a novel algorithm termed subsystem-based ADMM (SubADMM), designed to offer enhanced scalability with parallelization capabilities.

\subsection{Subsystem-Based Reformulation} \label{subsec:subadmm_reformulation}

In our robotic simulation, we assume that the multibody system is composed of a kinematic chain \cite{featherstone2014rigid}, where each body is connected to its parent body via various types of joints (fixed, floating, revolute, prismatic, etc.). 
We then define the notion of a \textit{subsystem} as a single subtree rooted at the ground. For example, a single floating rigid body or a single robot (each of robotic arm, humanoid, etc.) is regarded as a subsystem.

To better leverage the subsystem structure, we adopt a variations on the definition of augmented Lagrangian compared to the one described in Sec.~\ref{sec:multicontactAL}.
We first rewrite the multi-contact simulation problem \eqref{eq:prob_original} as follows:
\begin{align} \label{eq:prob_subdyn}
\begin{split}
    &A_j\hat{v}_j = b_j + \sum_i J_{ij}^T\lambda_{i}~~\forall j\in\left\{1,\cdots,N\right\} \\
    &\text{s.t.}~~(\hat{V}_i,\lambda_i) \in \mathcal{S}_{c,i}~~\forall i\in\left\{1,\cdots,M\right\}
\end{split}
\end{align}
where $M$ is the number of constraint, $N$ is the number of subsystem, and $\hat{V}_i$ is the subset of $\left\{\hat{v}_1,\cdots,\hat{v}_N\right\}$ that contributes to the $i$-th constraint.   
Here, $J_{ij}$ are defined only for $(i,j)$ such that $\hat{v}_j\in\hat{V}_i$. 
Note that our reformulation does not rely on any assumptions about the system. In the unconstrained case, the dynamics of each subsystem are readily decoupled (thus $A_j\hat{v} = b_j~\forall j$). The coupling between subsystems is modeled by constraint forces as described in \eqref{eq:prob_subdyn}, which include both intra- and inter-subsystem interactions.

\begin{figure}[t]
\centering
\includegraphics[width=8cm]{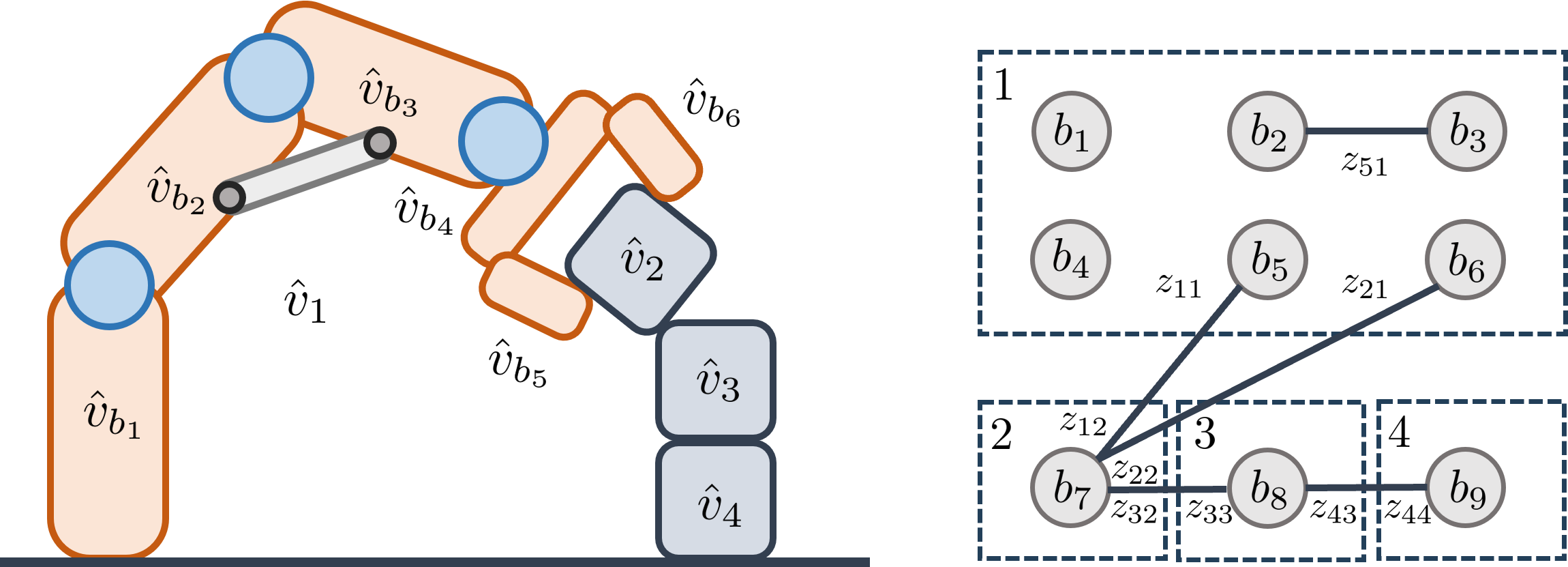}
\caption{Illustrative example demonstrating division and slack variable definition in SubADMM. Left: A multibody system with contacts comprising 4 subsystems, including 1 articulated body (robot) and 3 rigid bodies. Right: Corresponding graphical representation.}
\label{fig:subadmm_division}
\end{figure}

In robotic systems, constraints are applied either intra- or inter-bodies (e.g., contacts, tendons) or to joints (e.g., limits, controls).
Based on this insight, we also reformulate the structure of the Jacobian.
To illustrate the idea, let us consider the example in Fig.~\ref{fig:subadmm_division}.
In this example, the Jacobian for the first constraint (contact between the 5-th and 7-th bodies) can be written as follows:
\begin{align*}
    J_1 = 
    \begin{bmatrix}
        J_{1,b_5}J_{b_5,1} & J_{1,b_7}J_{b_7,2}
    \end{bmatrix}
\end{align*}
where $J_{b_\star,\star}$ maps the joint space to the body space, and $J_{\star,b_\star}$ maps the body space to the constraint space.
In this case, Jacobian for each subsystem is naturally defined as follows:
\begin{align} \label{eq:jacobian_body_splitting}
    J_{11} = J_{1,b_5}J_{b_5,1} \quad J_{12} = J_{1,b_7}J_{b_7,2}
\end{align}
Meanwhile, in the case of an inter-subsystem constraint, such as the 5-th constraint in Fig.~\ref{fig:subadmm_division}, acting internally on the first subsystem, we express the Jacobian as:
\begin{align} \label{eq:sub_jacobian}
    J_{51} = 
    \begin{bmatrix}
    J_{5,b_{2}}J_{b_{2},1} \\ J_{5,b_{3}}J_{b_{3},1}
    \end{bmatrix}
\end{align}
which splits the original Jacobian $J_{5,b_{2}}J_{b_{2},1}+J_{5,b_{3}}J_{b_{3},1}$ and stack it row-wise\footnote{For consistency, $\lambda_i$ in \eqref{eq:prob_subdyn} should technically be stacked row-wise for this case, but we maintain the current notation for simplicity.}.  
This reformulation is similarly applied for the constraints to joints.
Below, we describe how this reformulation can lead to an efficient ADMM process. 

\subsection{ADMM for the Reformulation}

Based on the reformulation described in Sec.~\ref{subsec:subadmm_reformulation}, we define slack variable to equivalently express the problem similar to \eqref{eq:prob_original_slack}:
\begin{align} \label{eq:prob_subdyn_slack}
\begin{split}
    &A_j\hat{v}_j = b_j + \sum_i J_{ij}^T\lambda_{i}~~\forall j\in\left\{1,\cdots,N\right\}\\
    &\text{s.t.}~~(Z_i,\lambda_i) \in \mathcal{S}_{c,i},~~z_{ij} = J_{ij}\hat{v}_j~~\forall i\in\left\{1,\cdots,M\right\}
\end{split}
\end{align}
where $Z_i$ is the set of slack variables $z_{ij}$ such that $(i,j)$ satisfies $\hat{v}_j\in\hat{V}_i$. 
Fig.~\ref{fig:subadmm_division} gives an example of how $z_{ij}$ are defined under the subsystem-based structure.
Then from the augmented Lagrangian structure described in Sec.~\ref{sec:multicontactAL}, surrogate problem for the reformulation \eqref{eq:prob_subdyn_slack} can be formulated as follows:
\begin{align} \label{eq:prob_sub_surrogate}
\begin{split}
(A_j + \sum_{i} \beta J_{ij}^TJ_{ij})\hat{v}_j - \beta \sum_{i} J_{ij}^Tz_{ij} &= b_j - \sum_{i} J_{ij}^Tu_{ij}~~\forall j\\
-\beta J_{ij}\hat{v}_j + \beta z_{ij} &=  u_{ij} + \lambda_i~~\forall i,j \\
\text{s.t.}~~&(Z_i,\lambda_i) \in \mathcal{S}_{c,i}~~\forall i.
\end{split}
\end{align}
The problem described in \eqref{eq:prob_sub_surrogate} still involves coupling between all $\hat{v}_j$ and $z_{ij}$. Therefore, solving it all at once would negate the benefits of using a subsystem-based representation.

However, by leveraging the ADMM structure \eqref{eq:ADMM}, an alternate resolution for each variable is performed, allowing us to capitalize on the subsystem-based partitioning described above.
By solving \eqref{eq:prob_sub_surrogate} with respect to $\hat{v}$ first, it reduces to following linear problem for all $j$:
\begin{align} \label{eq:subadmm_vhat}
(A_j + \sum_{i} \beta J_{ij}^TJ_{ij})\hat{v}_j^{l+1} = b_j + \sum_{i} J_{ij}^T(\beta z_{ij}^l-u_{ij}^l)
\end{align}
where $l$ denotes the iteration index.
The process described in \eqref{eq:subadmm_vhat} essentially involves obtaining a linear solution of size $\text{dim}(\hat{v}_j)$ for each subsystem, and always solvable from the positive definite property of the left-most matrix.
A crucial difference between \eqref{eq:vanilla_admm_linear} is that the linear problem is much smaller, while each of them can be solved in parallel. 
Each $A_j + \sum_{i} \beta J_{ij}^TJ_{ij}$ can be pre-factorized before iteration, as they remain invariant unless $\beta$ is modified.

Then, solving \eqref{eq:prob_sub_surrogate} with respect to $z_{ij}$ can be reduced to following problem for all $i$:
\begin{align} \label{eq:subadmm_z}
\begin{split}
    &\beta z_{ij}^{l+1} = \underbrace{\beta J_{ij}\hat{v}_j^{l+1} +u_{ij}^l}_{y_{ij}^{l+1}} + \lambda_i^{l+1}, ~~\text{s.t.}~~ (Z_i,\lambda_i) \in \mathcal{S}_{c,i}.
\end{split}
\end{align}
The solution of \eqref{eq:subadmm_z} can be performed independently for each $i$, allowing for parallelization across all constraints. 
Additionally, as described in Sec.~\ref{subsec:closed_form_slack}, the relation between $z$ and $\lambda$ is matrix-free, thus allowing for a closed-form solution process. However, the formulation needs to be slightly adjusted since multiple slack variables in $Z_i$ are involved in each constraint. 
For the case of hard constraint, \eqref{eq:closedform_hard} can be adjusted for the subsystem-based reformulation as follows:
\begin{align} \label{eq:closedform_hard_sub}
    &\lambda_i^{l+1} = \Pi_{\ge 0}\left (-\frac{\sum_{j} y_{ij}^{l+1} + \beta e_i}{\vert Z_i \vert}  \right)
\end{align}
where $\vert Z_i \vert$ denotes the cardinality of $Z_i$. 
Meanwhile, for the soft constraints, \eqref{eq:closedform_soft} can be adjusted as follows:
\begin{align} \label{eq:closedform_soft_sub}
    &\lambda_i^{l+1} = -\frac{b_i\sum_{j} y_{ij}^{l+1} + \beta k_ie_i}{b_i\vert Z_i\vert + \beta}.
\end{align}
Finally, for the contact constraints, we can use
\begin{align} \label{eq:closedform_contact_sub}
    \lambda_i^{l+1} = \Pi_{\mathcal{C}}^\text{strict} \left (-\frac{\sum_{j} y_{ij}^{l+1} + \beta e_i}{\vert Z_i \vert}  \right)
\end{align}
instead of \eqref{eq:closedform_contact}.
The resulting equations \eqref{eq:closedform_hard_sub}, \eqref{eq:closedform_soft_sub}, and \eqref{eq:closedform_contact_sub} still consist of simple algebraic operations, making them easy to compute.
Note that we directly employ the strict operator in \eqref{eq:closedform_contact_sub}, without adopting the cascaded approach with the proximal operator as used in CANAL. 
This is because ADMM strategically employs a single alternating solution without fully solving the surrogate problem, which provides conservative updates and maintains stability without requiring a specific convexification and globalization process.

After both alternating steps, the Lagrange multiplier is updated as follows:
\begin{align} \label{eq:subadmm_u}
\begin{split}
    u_{ij}^{l+1} 
    &= u_{ij}^l + \beta (J_{ij}\hat{v}_j^{l+1} - z_{ij}^{l+1}) \\
    &=-\lambda_i^{l+1}.
\end{split}
\end{align}
Hence, there is no necessity to store $u_{ij}$ separately; only $\lambda_i$ needs to be retained.
In summary, our SubADMM iterates between \eqref{eq:subadmm_vhat}, \eqref{eq:subadmm_z}, and \eqref{eq:subadmm_u}, with each step being naturally parallelizable per subsystem or per constraint as illustrated in Fig.~\ref{fig:subadmm_procedure}. This property ensures the scalability of the algorithm with respect to the number of subsystems and constraints. 

\begin{figure}[t]
\centering
\includegraphics[width=8cm]{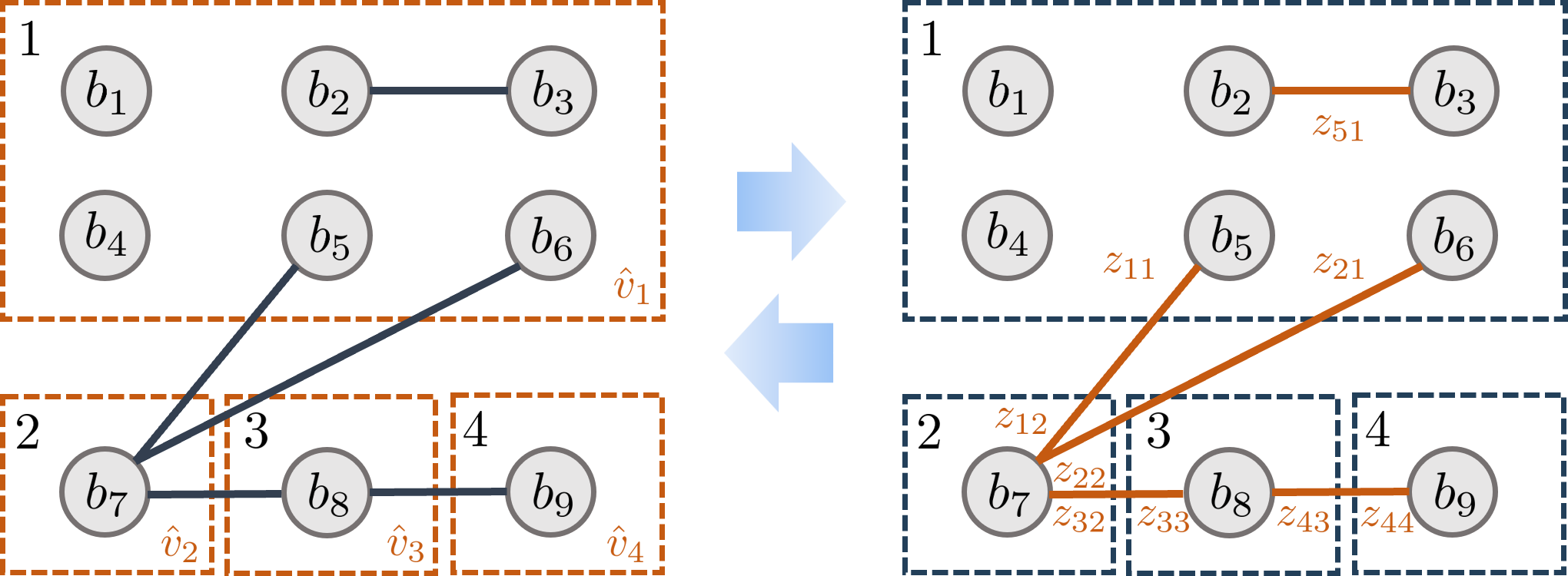}
\caption{Iteration structure of SubADMM. $\hat{v}$ is updated independently for each subsystem block, while $z$ is independently updated for each constraint factor.}
\label{fig:subadmm_procedure}
\end{figure}

\subsection{Factorization of Subsystem Matrices} \label{subsec:submatrix_factor}

In the SubADMM process described above, the size of each linear problem in \eqref{eq:subadmm_vhat} is determined by $\text{dim}(\hat{v}_j)$, which equals 6 for a subsystem consisting of a single rigid body. 
However for subsystems with articulated body structures, it corresponds to the total DOF of joints.
Generally, the factorization of a matrix has a complexity of $\mathcal{O}(n^3)$. Consequently, there may be concerns that our algorithm could become susceptible to an increase in the degrees of freedom of the subsystem, such as in a tree with extensive length, unless $A_j + \sum_{i} \beta J_{ij}^T J_{ij}$ possesses a special structure.

However, the reformulation technique \eqref{eq:sub_jacobian} we established earlier for the Jacobian matrix, becomes crucial in addressing this concern.
To illustrate, let us derive the equation as follows:
\begin{align} \label{eq:submatrix_structure}
\begin{split}
A_j + \sum_{i} \beta J_{ij}^TJ_{ij} 
&= A_j + \sum_{i} \sum_k \beta J_{b_k,j}^TJ_{i,b_k}^TJ_{i,b_k}J_{b_k,j}  \\
&= J_{\mathcal{B}_j,j}^T 
\underbrace{\begin{bmatrix}
H_{b_{j,1}} & & \\
& \ddots & \\
& & H_{b_{j,\vert \mathcal{B}_j \vert }}
\end{bmatrix}}_{H_{\mathcal{B}_j}}
J_{\mathcal{B}_j,j} 
\end{split}
\end{align}
where $\mathcal{B}_j$ represents the set containing every body index in $j$-th subsystem, $J_{\mathcal{B}_j,j}$ denotes the Jacobian mapping from subsystem joints to all child body spaces, and $H_{b_k}\in\mathbb{R}^{6\times 6}$ can be interpreted as the virtual effective matrix defined for each body, expressed as
\begin{align} \label{eq:submatrix_body}
H_{b_k} = M_{b_k} + \sum_i \beta J_{i,b_k}^TJ_{i,b_k}
\end{align}
where $M_{b_k}$ is the inertia matrix for the body originated from $A_j$.
One significant advantage of the structure \eqref{eq:submatrix_structure} is that $H_{\mathcal{B}_j}$ forms a block-diagonal matrix, ensuring that the entire matrix always maintains the same sparsity structure as the inertia matrix of the articulated body.
Hence, we can implement an efficient construction and factorization algorithm based on the kinematic-tree structure, which is well-known as Featherstone's algorithm \cite{featherstone2014rigid}. 
Specifically, we use composite rigid body algorithm \cite{featherstone2014rigid} that capitalizes on branch-induced sparsity to streamline computations, recursively navigating to the parent node to perform efficient fill-ins.
In Appendix~\ref{appendix:crba}, we details how the algorithm on \eqref{eq:submatrix_body} can be efficiently performed.
It is worth noting that, without the Jacobian reformulation described in \eqref{eq:sub_jacobian}, $H_{\mathcal{B}_j}$ does not exhibit block diagonal characteristics. As a result, the sparsity pattern becomes more complex and cannot be determined solely by the kinematic structure, but rather changes with each time step.

\begin{remark}
Based on the aforementioned matrix structure, rather than constructing and factorizing \eqref{eq:submatrix_structure}, we can solve the linear equation \eqref{eq:subadmm_vhat} using the articulated body algorithm \cite{featherstone2014rigid} at each SubADMM iteration step. 
While this approach strictly guarantees $\mathcal{O}(n)$ complexity for the body count, with no additional overhead for changes in $\beta$, we have observed that using factorization is often more efficient in practice, given that SubADMM typically requires a few tens of iteration. Nonetheless, this alternative remains as a viable option.
\end{remark}

\subsection{Convergence and Adaptive Penalty Parameter}

\begin{algorithm} [t]
\SetAlgoLined
\caption{Simulation using SubADMM} 
\label{alg:subadmm}
subsystem-based reformulation (Sec.~\ref{subsec:subadmm_reformulation}) \\
\While{simulation}{
initialize $\beta$ \\
$\forall j$ construct $A_j,b_j$ in parallel \\
$\forall i$ construct $e_i,J_i$ in parallel \\
$\forall j$ factorize $A_j + \sum_{i} \beta J_{ij}^TJ_{ij}$ in parallel (Sec.~\ref{subsec:submatrix_factor}) \\
initialize $l=0, z^0, u^0$ \\
\While{SubADMM loop}{
$\forall j$ update $\hat{v}_{j}^{l+1}$ in parallel \eqref{eq:subadmm_vhat}  \\
$\forall j$ update $Z_i^{l+1}$ in parallel \eqref{eq:subadmm_z} \\
$\forall i$ store $u_i^{l+1}$ \eqref{eq:subadmm_u} \\
compute residual $\theta_p,\theta_d$ \eqref{eq:subadmm_residual} \\
\eIf{$\theta_p+\theta_d < \theta_{th}$ or $l=l_{max}$} 
{
\textbf{break}
}
{
update $\beta$ \eqref{eq:subadmm_update_beta} (optional) \\
$\forall j$ refactorize $A_j + \sum_{i} \beta J_{ij}^TJ_{ij}$ in parallel 
}
$l\leftarrow l+1$ \\ 
}
update each subsystem state using $\hat{v}_j^{l+1}$
}
\end{algorithm}

For strictly convex optimization, ADMM is known to guarantee convergence at a linear rate \cite{boyd2011distributed}. 
Although our formulation shares similarities with the convex optimization \eqref{eq:prob_original_opt}, the multi-contact condition is generally not integrable, making theoretical convergence not well established in general. However, we empirically find that the convergence properties in our SubADMM-based simulations are comparable to those observed in convex optimization.

Typically, residuals in ADMM are defined in two kinds: primal and dual. 
In SubADMM, these definitions similarly hold as follows:
\begin{align} \label{eq:subadmm_residual}
\begin{split}
\theta_p &= \max_{i,j} \| J_{ij}\hat{v}_j - z_{ij} \| \\
\theta_d &= \max_j \| A_j\hat{v}_j - b_j -\sum_i J_{ij}^T\lambda_i \|
\end{split}
\end{align}
where $\theta_p$ and $\theta_d$ represent the primal and dual residuals, respectively. Here, the primal residual can be interpreted as the satisfaction of constraints, while the dual residual reflects the satisfaction of the dynamics equations.
Although we observe stable convergence of the residuals \eqref{eq:subadmm_residual} in SubADMM, the method still inherits well-known drawbacks associated with ADMM-style iterations. Specifically, the performance of the algorithm is heavily dependent on the strategy used to select the penalty parameter $\beta$.
A popular strategy is to adaptively tune $\beta$ based on the residual. Generally, a large $\beta$ reduces the primal residual, while a small $\beta$ reduces the dual residual.
Therefore, we can adopt the strategy of adjusting $\beta$ based on the following rule:
\begin{align} \label{eq:subadmm_update_beta}
    \beta = \beta \sqrt{\dfrac{\theta_p}{\theta_d}} \quad \text{if} \quad \theta_p>\gamma\theta_d~\text{or}~\theta_d>\gamma\theta_p
\end{align}
where $\gamma > 1$ is a hyperparameter.
This adjustment should accompany the refactorization of $A_j + \sum_{i} \beta J_{ij}^TJ_{ij}$. 
However, thanks to our subsystem-based division structure, this refactorization can also be performed in a parallelized and scalable manner, allowing us to reduce overhead and enable more frequent feedback adjustments.

Moreover, another crucial aspect we observe to address this issue is that the presence of large number of inactive constraints (the open case for contact) can impede convergence. 
Therefore, conducting thorough broad-phase collision tests to cull reasonable contact pairs is a significant step in enhancing convergence speed.
For the initialization of $\beta$, we consider the structure of the terms in \eqref{eq:submatrix_structure}. A practical strategy involves balancing the weighting between the dynamics-related term $A_j$ and the constraint-related term $\sum_i J_{ij}^T J_{ij}$, as suggested in general theoretical analysis \cite{ghadimi2014optimal,giselsson2016linear}.
From this perspective, $\beta$ in \eqref{eq:submatrix_body} can be interpreted as reflecting a pseudo-density (i.e., body mass divided by the number of contact points on it), given that the Jacobian $J_{i,b_k}$ maps the motion of the body frame to the motion of a specific point on the body (see also derivation in Appendix~\ref{appendix:crba}).
Based on this insight, we use the geometric mean of the pseudo-densities of all bodies as the initial value for $\beta$.

\subsection{Comparison with Previous Work}

Portions of the algorithm outlined in this section were presented in our previous work \cite{lee2023modular}, applying the idea of leveraging ADMM in physics simulations and the division of the entire multibody system into smaller parts. 
However, the scope of this article extends to more general theories and diverse variations of the augmented Lagrangian for robotic multi-contact simulations, as outlined in Sec.~\ref{sec:multicontactAL} and \ref{sec:canal}. 
For this section specifically, this article provides a complete subsystem division and parameter adaptation rule for subsystem-based ADMM. 
More importantly, we introduce novel reformulation techniques \eqref{eq:sub_jacobian}, enabling efficient factorization of submatrices described in Sec.~\ref{subsec:submatrix_factor}. This improvement enhances scalability, not only for the number of subsystems but also for the DOF of each subsystem.
Finally, a variety of new manipulation examples are implemented in simulations for the experiments.

\section{Examples and Evaluations} \label{sec:exampleeval}

In this section, we present several implementation examples to illustrate the advantages of the proposed solver algorithms. The key question we address here is whether our AL-based multi-contact solver (CANAL and SubADMM) can address the limitations of existing per-contact schemes in solving NCP posed in various robotic scenarios. 
As a universal baseline, we have implemented the Projected Gauss-Seidel (PGS) solver, which is widely utilized in numerous software applications (see Table~\ref{table:simulator_compare}) and can handle NCP without the need for model relaxation. 
Note that we do not consider methods that depend on specific model relaxations, such as direct pivoting schemes based on LCP formulation. 

Quantifying the accuracy of different multi-contact solvers is non-trivial. 
For each time step, it is essential to assess how well the solution $(\hat{v}, \lambda)$ satisfies two conditions: dynamics and contact constraints. 
For consistent comparison, we first project $\hat{v}$ using the relationship $\hat{v} = A^{-1}(b + J^T\lambda)$ based on the $\lambda$ result of solver, making the dynamics residual become zero. 
Subsequently, we compute $z$ using the following equation:
\begin{align*}
    z_i = J_i\hat{v} - \lambda_i + T(-J_i\hat{v} + \lambda_i - e_i)
\end{align*}
which is derived from the process described in Sec.~\ref{subsec:closed_form_slack}, and we measure the contact residual as $\|J\hat{v} - z\|$ divided by the number of contacts. This residual becomes zero only when the contact conditions are exactly satisfied. Using this strategy, we can fairly compare the accuracy of different solvers: dual-based (PGS) and primal-dual based (CANAL and SubADMM).
For all examples, we utilize a time step of $1/240~\rm{s}$.

For the code implementation, we utilize the C++ language, employing the Eigen matrix library \cite{eigenweb} for linear algebra operations and the OpenGL library for rendering. Computation time is measured on an Intel Core i5-13600KF CPU at 3.50 GHz.

\subsection{Bolt-Nut Assembly} \label{subsec:exp_boltnut}

\begin{figure}[t]
\centering
\includegraphics[width=8.4cm]{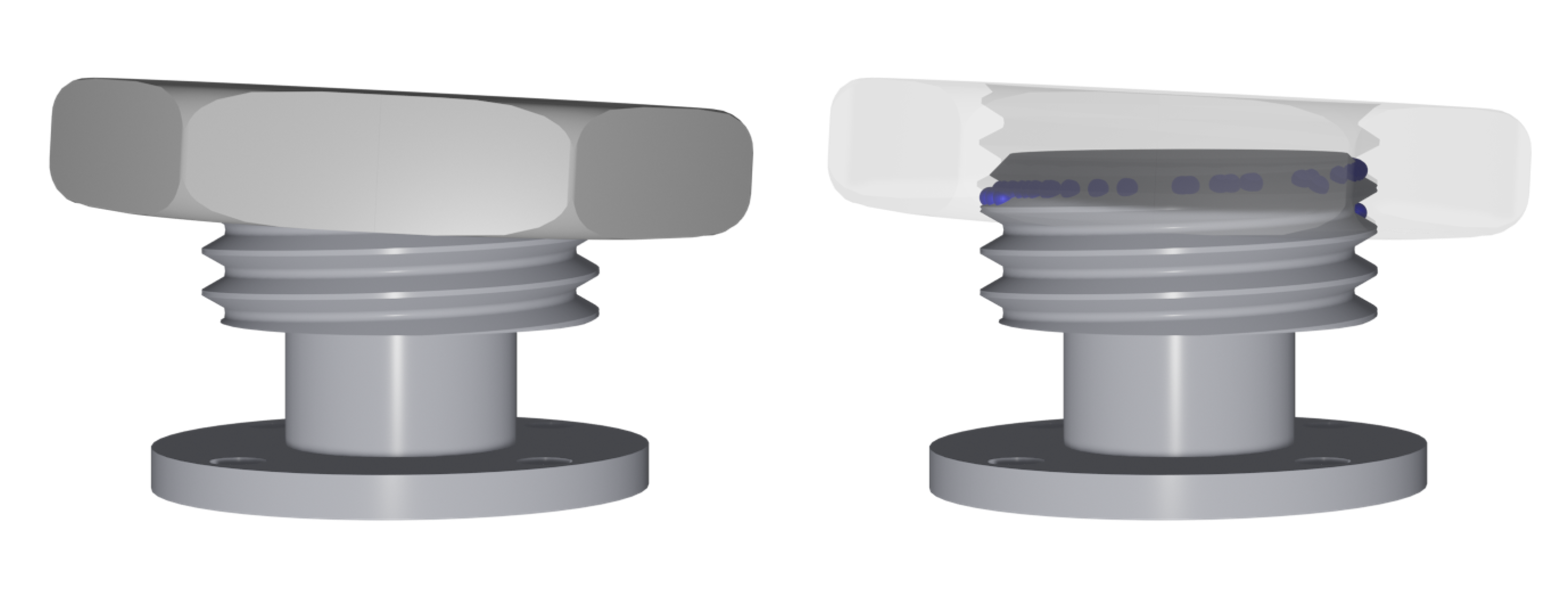}
\caption{Left: Visualization of an M48 bolt and nut used in our bolt-nut assembly simulation test. Right: Contact points visualized as blue spheres in the test configuration.}
\label{fig:boltnut_m48}
\end{figure}

\begin{figure}[t]
\centering
\includegraphics[width=8.4cm]{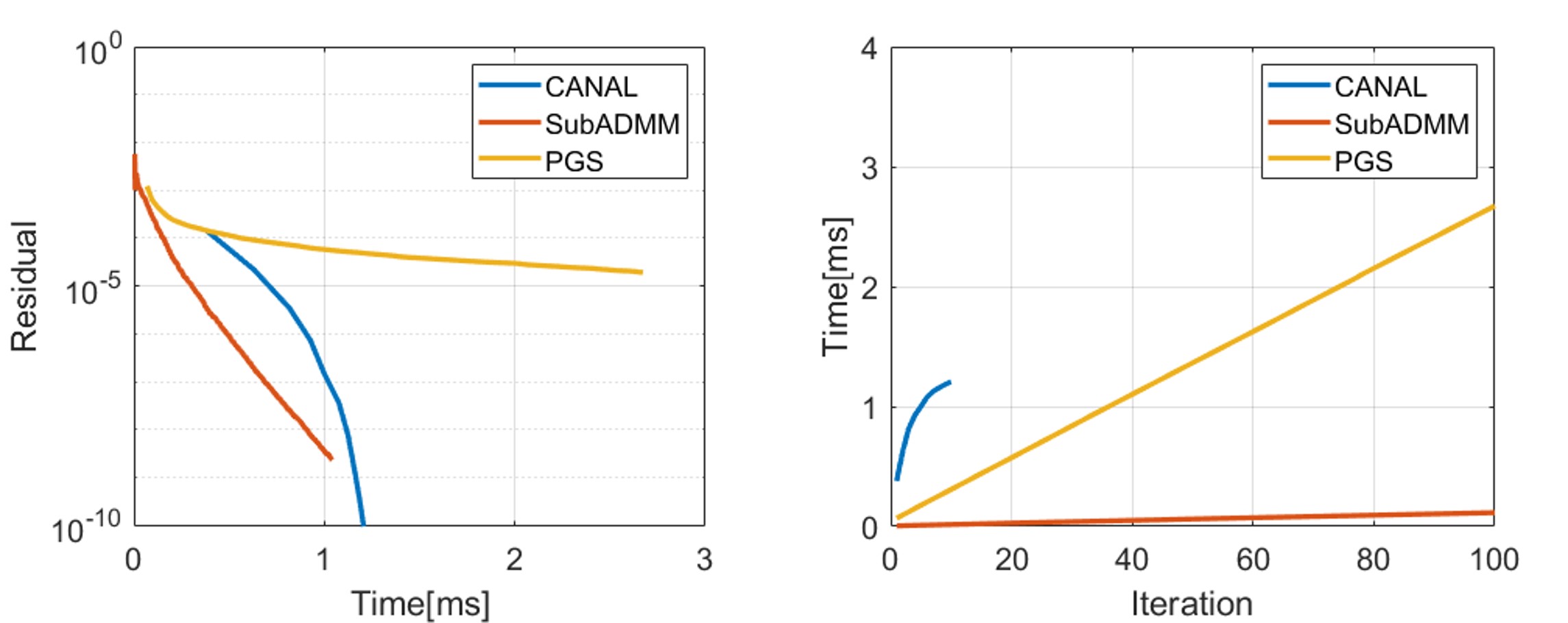}
\caption{Comparison of CANAL, SubADMM, and PGS for the bolt-nut assembly task simulation. Left: Residual decrease over computation time. Right: Computation time over iteration.}
\label{fig:boltnut_results}
\end{figure}

The first example we consider is the simulation of a bolt-nut assembly. This scenario is characterized by the intensive formation of contacts and stiff interactions due to the complexity of the geometry. As a result, the DOF for the constraints (typically hundreds) far exceed those of the system itself. 
In this context, two significant issues arise with the per-contact solver: 1) the Delassus operator becomes large and dense (i.e., many contacts are coupled), which slows down the iteration scheme, and 2) the intensive contact results in a limited feasible region or even leads to infeasibility, thereby slowing convergence.

The scenario is configured via a pair of M48 bolt and nut, visualized in Fig.~\ref{fig:boltnut_m48}. 
Here, collision detection process between the bolt and nut might be challenging. Therefore, we adopt a neural network-based signed distance function model presented in \cite{yoon2022fast}. Based on this model, we can precisely represent the surface of the bolt, while the nut is represented through multiple triangulated face to perform collision detection using Frank-Wolfe algorithm \cite{macklin2020local}. Additionally, if the number of detected contact points exceeds 120, which is empirically found to be impractical, we perform contact clustering \cite{jain1999data,yoon2022fast} to effectively reduce them.

\subsubsection{Single Step Test}

To precisely evaluate the quantitative performance, we first measure the results of running different solvers single step at the same state and inputs.
For test case generation, we sample $10$ configurations of a nut that form numerous contacts with the bolt (see Fig.~\ref{fig:boltnut_m48} for an example). We then apply $10$ random external wrenches to the nut, generating a total of $100$ test cases.
The performance of the solvers in such scenarios is depicted Fig.~\ref{fig:boltnut_results}. 
As demonstrated, CANAL and SubADMM solve the problems with higher accuracy compared to the PGS algorithm. Both algorithms can achieve residuals of $10^{-8}$ or less, whereas PGS struggles to converge quickly past $10^{-4}$. 
In particular, CANAL exhibits superlinear convergence, achieving complete convergence in about 10 iterations. While SubADMM quickly converges to a certain accuracy, due to its first-order nature, it shows lower accuracy than CANAL after a certain period. 
For the computation time per iteration, SubADMM requires significantly less compared to the other two algorithms as it requires negligible preparation phase cost and each iteration can be performed very quickly. CANAL takes more time per iteration compared to PGS, however, its rapid convergence leads to fewer iterations overall, resulting in shorter total computation time. 
While SubADMM and PGS incur uniform costs for each iteration, leading to a linear increase in computation time, computation time for CANAL grows sublinearly as iterations progress.
This is due to the effect of warm-starting explained in Sec.~\ref{subsec:canal_warmstart_penalty}, as the outer AL iteration approaches convergence, the number of required Newton iterations to solve the inner convex problem decreases.   
Overall, the results suggest that if we aim to achieve reasonable accuracy in a very short amount of time, SubADMM is a good option. Conversely, if very high accuracy is desired and more computational budget is available, CANAL is the preferable option.

\subsubsection{Assembly using Robot Manipulator}

We also perform a full assembly task simulation using a robotic manipulator. The manipulator comprises a Franka Panda arm equipped with a Hebi X5 gripper, with the nut attached to the gripper. We control the robot using joint-level impedance control to follow the desired assembly trajectory, and the snapshots are depicted in Fig.~\ref{fig:thumbnail}. For performance validation, we limit the computation budget for the solver to $0.5~\rm{ms}$ for each time step. As a result, simulations using CANAL and SubADMM successfully complete the assembly task. In contrast, PGS fails, as significant penetrations are generated due to its lack of convergence.

\subsection{Dish Piling}

The next scenario we implement involves piling dishes, a common situation in household environments. To compose the environment, we generate various types of dishes, including bowls, plates, and pots, as depicted in Fig.~\ref{fig:dishpile_objects}. As the shapes of the dishes are concave, this scenario is characterized by a large number of contacts distributed across the surfaces, leading to issues similar to those described in  Sec.~\ref{subsec:exp_boltnut}. Moreover, we model light bowls and plates (0.1 kg) beneath a heavy pot (5 kg), resulting in a challenging mass ratio for the stable simulation.

We develop a specially designed class of signed distance functions to represent the geometry of dishes, which is detailed in Appendix~\ref{appendix:dish_sdf}. This approach enables us to compute the signed distance using simple algebraic operations, such as rounding and revolution. Subsequently, we generate the corresponding triangulated faces to perform collision detection between dishes, employing the same Frank-Wolfe algorithm used in the bolt-nut assembly scenario.

\subsubsection{Single Step Test}

\begin{figure}[t]
\centering
\includegraphics[width=8.4cm]{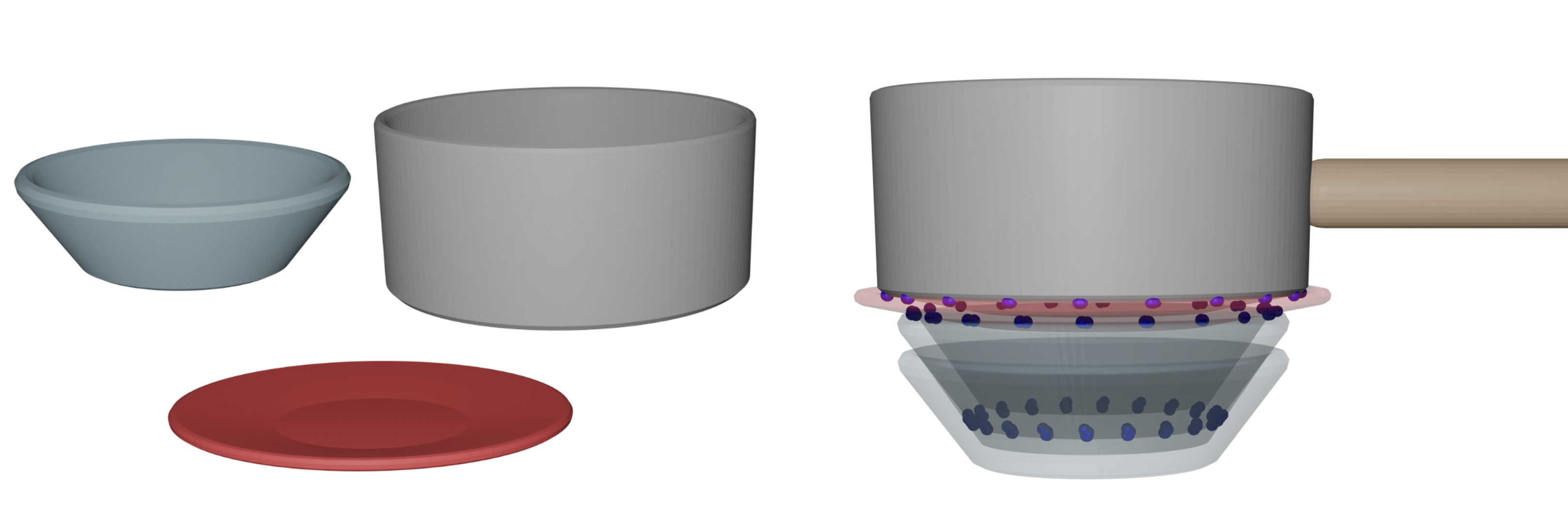}
\caption{Left: Visualization of a bowl, plate and pot used in our dish piling simulation test. Right: Contact points visualized as blue spheres in the test configuration.}
\label{fig:dishpile_objects}
\end{figure}

\begin{figure}[t]
\centering
\includegraphics[width=8.4cm]{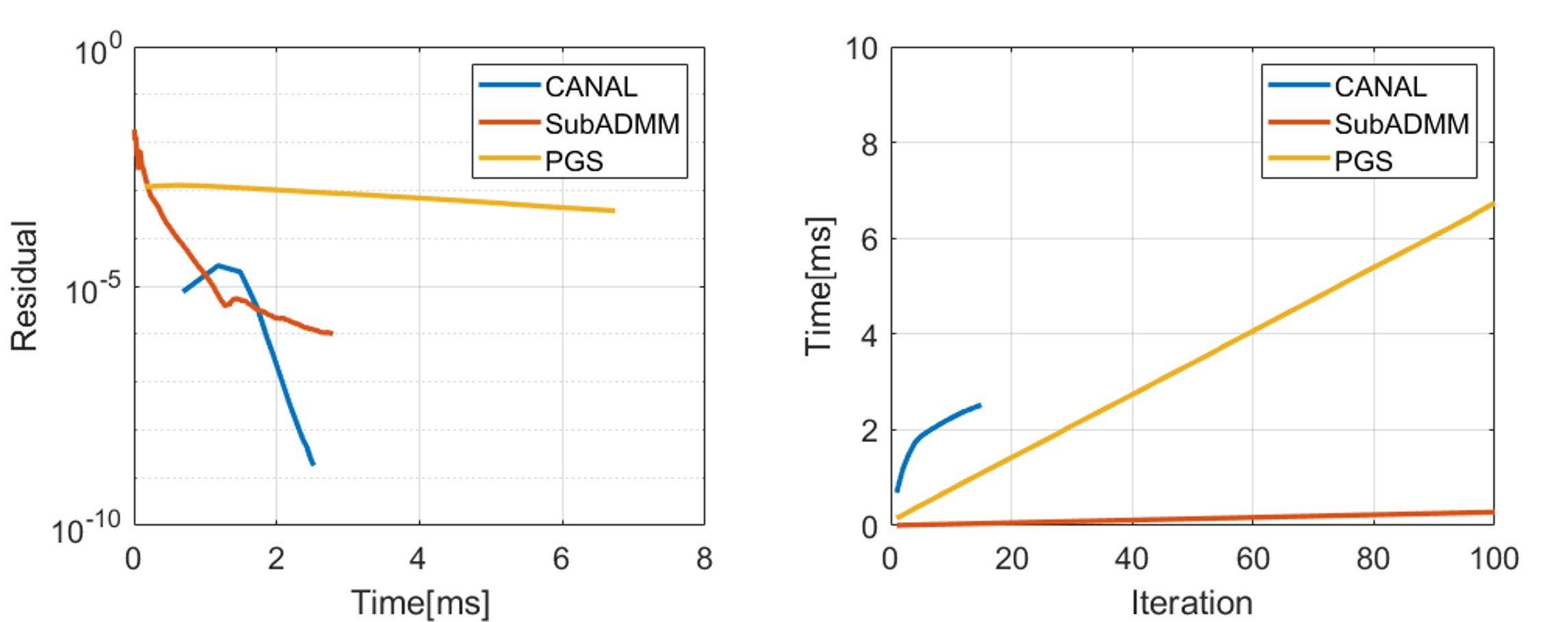}
\caption{Comparison of CANAL, SubADMM, and PGS for the dish piling simulation. Left: Residual decrease over computation time. Right: Computation time over iteration.}
\label{fig:dishpile_singlestep}
\end{figure}

For a single-step test, we first obtain the stacked pose of four dishes (bowl-bowl-plate-pot), as depicted in Fig.~\ref{fig:dishpile_objects}. 
We then apply random external wrenches to each dish, generating a total of 100 test cases. 
The performance of the solvers in these cases is depicted in Fig.~\ref{fig:boltnut_results}. 
As shown, CANAL and SubADMM achieve significantly better accuracy in less time compared to PGS. 
CANAL achieves the highest accuracy, with residuals under $10^{-8}$, and exhibits over linear convergence. SubADMM, demonstrating first-order convergence, struggles to achieve residuals under $10^{-6}$. Due to the odd mass ratio present in the environment, the differences in achievable residuals between the solvers are larger compared to those in the bolt-nut assembly test. This suggests that CANAL may be the more preferable option in this case, although SubADMM remains a viable choice for achieving moderate results in a very short time.
The trend in computation time per iteration is similar to that observed in bolt-nut assembly scenarios; per-iteration cost ranks as follows: CANAL $>$ PGS $>$ SubADMM, and the cost for each iteration in CANAL tend to decreases as the iterations proceed.

\subsubsection{Piling using Robot Manipulator}

We also perform a piling task simulation using a robotic manipulator composed of a Franka Panda arm equipped with an Allegro hand, bringing the total system dimension to 47. We employ a joint-level impedance controller to enable the robot to follow the desired grasp-and-place trajectory, as depicted in the snapshots in Fig.~\ref{fig:thumbnail}. Similar to the bolt-nut assembly simulation test, we limit the computation budget to 1~$\rm{ms}$ to compare the performance of different solvers. In this test, SubADMM and CANAL can successfully simulate the piling, while PGS fails, generating jittery movements due to a lack of convergence.

\subsection{Pouring}

\begin{figure}[t]
\centering
\includegraphics[width=8.4cm]{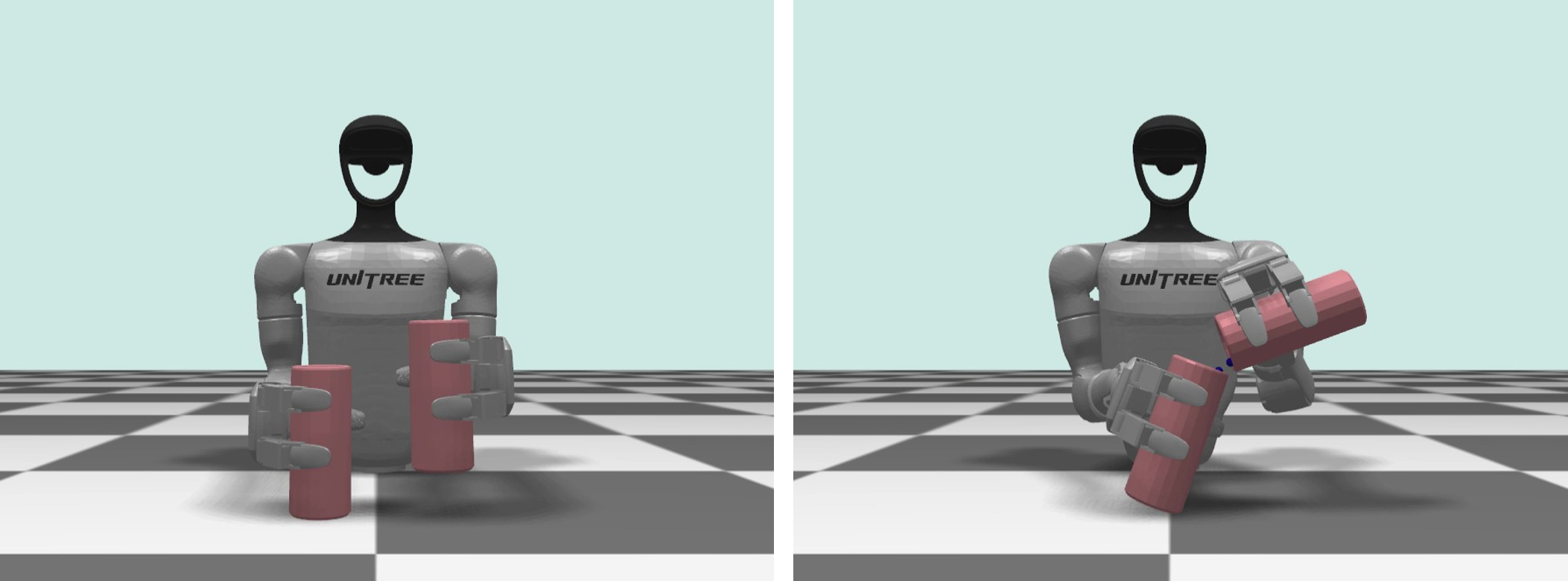}
\caption{Snapshots of a particle pouring task simulation. SubADMM excels in computation speed and scalability; CANAL in accuracy.}
\label{fig:pouring_snapshot}
\end{figure}

\begin{figure}[t]
\centering
\includegraphics[width=8.4cm]{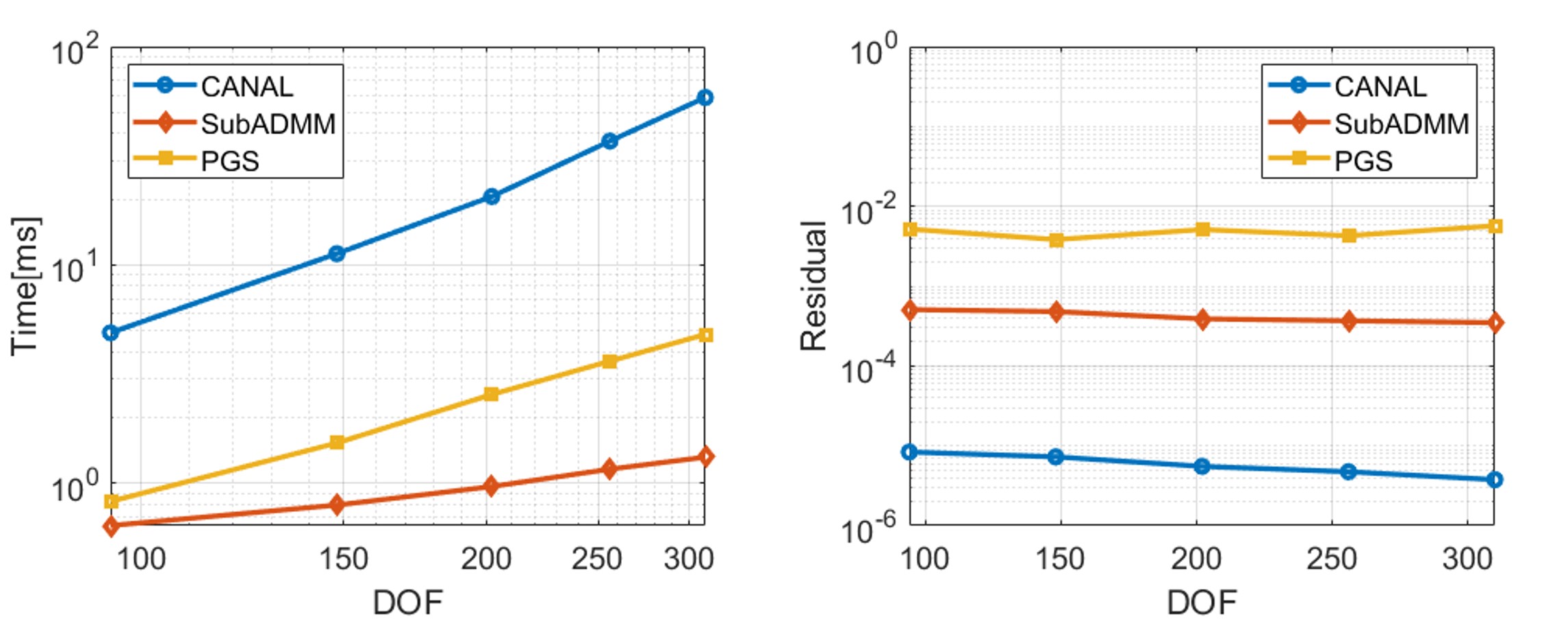}
\caption{Comparison of CANAL, SubADMM, and PGS for the pouring simulation. Left: Average computation time over system DOF. Right: Average residual over system DOF.}
\label{fig:pouring_scalability}
\end{figure}

We then simulate the pouring of particles contained in a bottle, employing a 28 DOF dual-arm manipulator with the upper body of a Unitree G1 with gripper. 
Also the particles are modeled as spheres with a radius of 5~$\rm{mm}$ and a density of 10~$\rm{g}/\rm{cm}^3$, contained within a bottle whose geometry is defined in the same way to the dishes in the previous subsection. 
We program the robot to reach for, grasp the bottle, and then pour the particles into another bottle, as depicted in the simulation snapshots in Fig.~\ref{fig:pouring_snapshot}.

This scenario is characterized by a large number of bodies; therefore, both the system DOF and the constraint DOF are large, yet their relationship is relatively sparse compared to previous examples. 
Moreover, the results are expected to maintain a stable grasp and precisely emulate the coupling between a high-gain controlled robot and lightweight particles.
Our primary objective for this scenario is to evaluate the scalability of the solvers. Therefore, we utilize a fixed number of iterations for each solver—5 for CANAL and 100 for SubADMM and PGS, then observe the average residual and computation time over the task execution while varying the number of particles in the bottle (9, 18, 27, 36, 45).

The results are depicted in Fig.~\ref{fig:pouring_scalability}.
As shown in the plots, the increase in computation time with respect to the particle number (therefore, system DOF) is ordered as SubADMM $<$ PGS $<$ CANAL. 
This result aligns with the theoretical properties, as the alternating steps of SubADMM scale at least linearly with the number of subsystems and constraints. 
Note that this scalability could potentially be reduced with the adoption of more advanced parallelization hardware architectures, although we remain this as a work for future implementation.
In the case of PGS, while the computation time for each Gauss-Seidel iteration step increases near linearly, the computation required to establish a contact relation in the dual space (related to the Delassus operator) increases superlinearly, making the overall exponent over SubADMM.  
CANAL scales superlinearly mainly due to the factorization of the Hessian matrix \eqref{eq:canal_hessian}. However, the exponent is still significantly lower than $3$, which is typical for dense matrix factorization, because we leverage the sparse structure of the contact Jacobian.

In terms of accuracy, CANAL outperforms both SubADMM and PGS, aligning with theoretical expectations and previous results. 
Within $5$ iterations, the solver achieves a residual under $10^{-5}$. Additionally, SubADMM consistently demonstrates better accuracy than PGS, achieving approximately 0.1 times the residual of PGS.
We find that using PGS, attempting a strong grasp on the bottle can lead to significant instability in the simulation. For all solvers, the residual is not significantly affected by the number of particles.

\subsection{Ablation Studies}

Lastly, we conduct ablation studies to validate the effectiveness of the technical components presented in this article.

\subsubsection{CANAL}

\begin{table}[t]
\centering
\caption{Comparison results of leveraging a cascaded Newton structure versus directly applying standard semismooth Newton solvers in an AL-based multi-contact solver.}
\renewcommand{\arraystretch}{1.5}{
\begin{tabular}{|c|c|c|c|}
\hline
& CANAL & TRDogleg & DampedSN  \\
\hline
Inner iter. & 30.55 & 118.8 & 246.9  \\
\hline 
Failure[$\%$] &  0  & 1.2 & 1.1  \\
\hline 
Residual[-$\log$] & 8.2231 & 8.5693 & 8.5644 \\
\hline
\end{tabular}
}
\label{table:ablation_canal}
\end{table}

First, we compare a cascaded Newton structure in CANAL with performing augmented Lagrangian by directly solving \eqref{eq:semismooth_eq} using standard semismooth Newton methods.
For baselines, we implement two algorithms: trust-region dogleg algorithm \cite{nocedal1999numerical}, which is a widely standard method for nonlinear equation solver, and damped semismooth Newton algorithms based on backtracking line search on merit function \cite{de1996semismooth}.
For both methods, Jacobian of $r(\hat{v})$ is derived similarly with \eqref{eq:canal_hessian_33} and \eqref{eq:canal_hessian}. 
However, since the operator $T$ is a strict operator here, the Jacobian is not guaranteed to be symmetric positive definite. Therefore, solving the linear problem may require additional computational effort in practice, compared to the Newton step computation in CANAL \eqref{eq:canal_newtonstep}.

For the test, we utilize the same environment and cases described in the single-step test for the dish piling. The results are shown in Table~\ref{table:ablation_canal}.  
Here, we perform 10 AL loops for each test case, and measure the number of inner iterations (i.e., Newton steps) throughout the entire AL loop, the failure rate (if the inner loop fails to find the surrogate problem solution within the desired tolerance), and the residuals of the resulting solutions.

As indicated in the table, the number of inner iterations is significantly lower for CANAL. This reduction is due to its cascaded structure, which can transform solving the surrogate problem into solving a convex optimization problem. Consequently, this structure allows for a convergence guarantee to the global minimum with exact line search.
However, both TRDogleg and DampedSN often gets stuck in a zone where it cannot effectively reduce the merit function, leading to requirement of more inner iteration steps and occasional failure. TRDogleg relatively performs better than DampedSN, with half the required inner iterations. 
Comparing the residuals after 10 iterations, CANAL exhibits similarly low residuals compared to the two baselines. This indicates that updating $\tilde{e}_i$ in every AL iteration within the cascaded structure does not substantially degrade the convergence properties of the entire algorithm.

\subsubsection{SubADMM}

Next, we compare our SubADMM with ADMM implementations based on more standard splitting strategies. For baselines, we implement the following methods: 1) ADMM without subsystem-based splitting (\texttt{No Sub}), as described earlier in Sec.~\ref{sec:subadmm}, which requires solving a full system size linear equation \eqref{eq:vanilla_admm_linear}, and 2) ADMM that utilizes subsystem-based splitting but does not employ body-based splitting \eqref{eq:jacobian_body_splitting} (\texttt{No Body}), which is equivalent to the algorithm in our prior work \cite{lee2023modular}.

\begin{figure}[t]
\centering
\includegraphics[width=8.4cm]{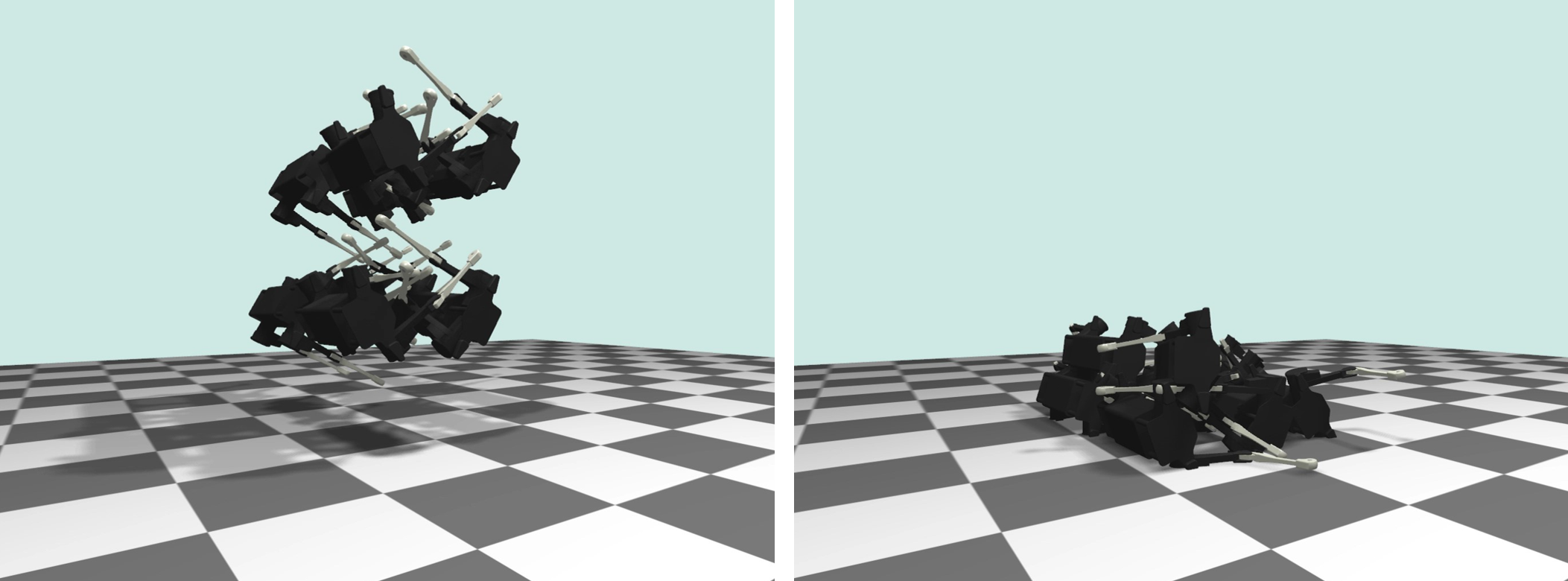}
\caption{Snapshots of a multiple Laikago dropping simulation. The system is divided into 8 subsystems, each interpreted as a kinematic tree.}
\label{fig:laikagodrop_snapshot}
\end{figure}

\begin{table}[t]
\centering
\caption{Comparison Results of ADMM with subsystem-based splitting versus more standard splitting strategies.}
\renewcommand{\arraystretch}{1.5}{
\begin{tabular}{|c|c|c|c|}
\hline
& SubADMM & No Sub & No Body  \\
\hline
Time[ms] & 0.435 & 1.21 & 0.638  \\
\hline 
Residual[-$\log$] &  4.8802  &  4.8817 & 4.8562  \\
\hline 
\end{tabular}
}
\label{table:ablation_subadmm}
\end{table}

For the test, we set up an environment where multiple quadrupedal robots Laikago, are dropped onto the floor, as depicted in Fig.~\ref{fig:laikagodrop_snapshot}.
In this scenario, each robot possesses 18 DOF, resulting in a total system DOF of 144. For the collision geometry, we approximate the trunk, legs, and feet using simple primitives such as boxes and spheres. Typically, $80-100$ contacts are generated during the simulation.  
We use fixed 100 iterations for SubADMM and other baselines.

The comparison results are shown in Table~\ref{table:ablation_subadmm}. 
As demonstrated, the ablation studies highlight the advantages of the proposed techniques. 
SubADMM achieves the shortest computation time due to its ability to utilize efficient matrix assembly, parallelized matrix factorization and solving.
\texttt{No Sub} takes the longest computation time, requiring about $2.78$ times more than SubADMM primarily due to its need for whole system size matrix factorization. 
Compared to \texttt{No Body}, SubADMM is about $1.46$ times faster, as \texttt{No Body} cannot exploit efficient submatrix factorization based on the structure given in \eqref{eq:submatrix_structure}. 
Such differences could become more pronounced as the overall system dimension increases, or through the employment of advanced code parallelization techniques in SubADMM.
For example, in 27 Laikago dropping simulation, we find SubADMM is $3.63$ times faster than \texttt{No Sub}.
The residuals for all three solvers are similar, as they share similar theoretical convergence properties of ADMM.

\section{Discussions and Concluding Remarks} 
\label{sec:discussremark}

In this article, we introduce two multi-contact solver algorithms, CANAL and SubADMM, based on variations of the augmented Lagrangian method. Our formulation extends the theory of AL to handle multi-contact NCP by iteratively solving surrogate problems and subsequently updating primal and dual variables. 
In CANAL, we variate this AL-based structure into a cascaded form of convex optimization, which can be solved by exact Newton steps, thereby ensuring accurate and robust simulation results. 
In SubADMM, we employ the concept of ADMM to enable an alternating solution approach to the surrogate problem. Here, we propose a novel subsystem-based variable splitting method, which not only achieves a parallelizable structure but also preserves the sparsity pattern of the submatrix, significantly improving efficiency.
The examples demonstrate their effectiveness in various robotic simulations characterized by intensive contact formation and stiff interactions, and also illustrate the trade-offs between CANAL, SubADMM, and other existing methods.

A variety of future research areas remain open within the presented framework. From an algorithmic perspective, the CANAL algorithm could be tested with other convex optimization methods, such as the conjugate gradient \cite{fletcher1964function} or accelerated projected gradient\cite{lee2022large}, for example. SubADMM could be enhanced with various strategies to improve its convergence, including the acceleration of fixed-point iterations \cite{themelis2019supermann} and advanced penalty parameter update schemes. Although our initial trials observed that the application of these schemes could degrade the robustness of the simulation, the development of a solid methodology still remains an open question.
From an implementation perspective, several component of the current framework can be improved. For instance, tailored factorization based on the branch-induced sparsity structure could be adopted for CANAL. 
Additionally, a GPU implementation for SubADMM could fully exploit its parallelizable nature.

We believe that the algorithms presented in this article can be further employed in the development of other model-based solvers for robotic applications. For example, our problem described in \eqref{eq:prob_original}, when formulated without friction, becomes equivalent to a quadratic programming problem, which is commonly encountered in motion primitives, planning, and model predictive control of robotic systems. Moreover, a forward simulation solver can be coupled with the differentiation of the results, utilized to address diverse inverse problems involving contact \cite{howell2022dojo}. Our highly accurate solutions are particularly beneficial from this perspective.

Finally, while the focus of this article is on contact solvers, we also believe that contact modeling is a crucial aspect of robotic simulation. This includes the representation of geometry, definition of contact features, time stepping, friction modeling, and often linked with the contact solver. In this regard, investigating how our AL-based solver can be effectively integrated with various aspects, including continuous collision detection \cite{li2020incremental}, contact fields \cite{castro2023theory}, temporal position updates \cite{macklin2019small}, anisotropic friction, and lubrication \cite{ludema2018friction}, will be a valuable topic to explore.

\appendices

\section{Composite Rigid Body Algorithm for Subsystem Matrices} \label{appendix:crba}

Contact Jacobian with respect to spatial velocity of the body can be written as follows:
\begin{align} \label{eq:contact_jacobian}
    J_{i,b_k} = 
    R_i\begin{bmatrix}
        I_{3\times 3} & -[p_{i}] 
    \end{bmatrix}
\end{align}
where $R_i\in \mathrm{SO}(3)$ is the contact frame generated through the contact normal and $p_i$ is the global position of the contact point.
Based on \eqref{eq:contact_jacobian}, \eqref{eq:submatrix_body} can be written as follows:
\begin{align*}
H_{b_k} 
&= M_{b_k} + \sum_i \beta J_{i,b_k}^TJ_{i,b_k} \\
&= \begin{bmatrix} 
m_{b_k} I_{3\times 3} & -m_{b_k}[c_{b_k}] \\
    m_{b_k}[c_{b_k}] & I_{b_k}-m_{b_k}[c_{b_k}]^2
\end{bmatrix} 
+ \sum_i \beta \begin{bmatrix}
    I_{3\times 3} & -[p_{i}] \\
    [p_{i}] & - [p_{i}]^2
\end{bmatrix}
\end{align*}
where $m_{b_k}$ is the mass, $I_{b_k}$ is the moment of inertia, $c_{b_k}$ is the global center of mass position of the body.
It can be easily verified that $H_{b_k}$ shares the same fill-in structure as $M_{b_k}$, with only 10 elements required for matrix storage. Therefore, as in the original composite rigid body algorithm, addition and multiplication (of spatial transformation matrix) operations can be performed with equal efficiency.

\section{Signed Distance Function of Dish Module} \label{appendix:dish_sdf}

\begin{figure}[t]
\centering
\includegraphics[width=7.0cm]{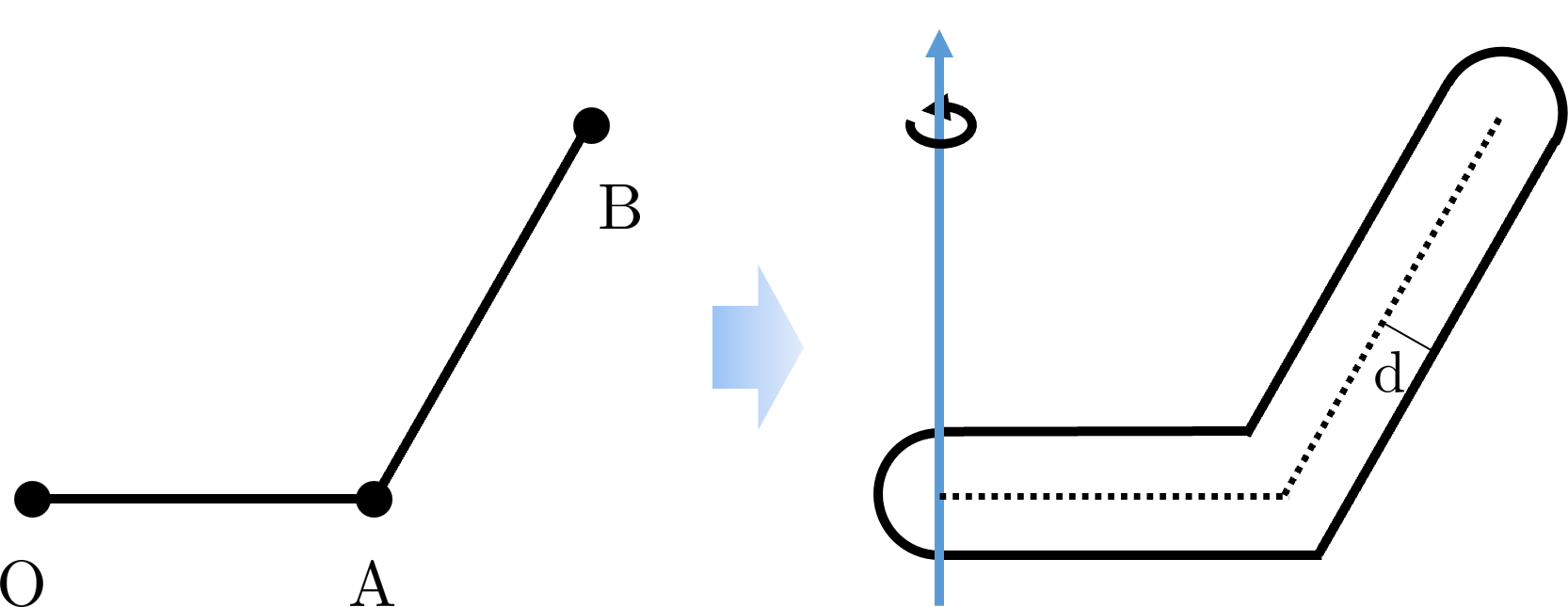}
\caption{Illustrations for dish signed distance function module generation.}
\label{fig:dish_sdf}
\end{figure}

In our experiments, signed distance function of dishes are defined as follows:
\begin{align*}
    \text{SDF}_{\text{3d}}(p) = \text{SDF}_{\text{2d}}\left(\begin{bmatrix}
        \sqrt{p_1^2+p_2^2} & p_3
    \end{bmatrix}\right)
\end{align*}
which is indeed revolution of following 2D signed distance function (see also Fig.~\ref{fig:dish_sdf}):
\begin{align} \label{eq:dish_sdf_2d}
    \text{SDF}_{\text{2d}}(p) = \min\left(\text{dist}(p,\overline{OA}),\text{dist}(p,\overline{AB})\right)-d
\end{align}
where $A,B$ are points on the plane, and $d$ is a thickness for padding.
By adjusting $A,B$ and $d$ in \eqref{eq:dish_sdf_2d}, we can generate diverse range of dishes.
Additionally, the derivative of \eqref{eq:dish_sdf_2d} can be computed analytically, and thus it can be utilized in the collision detection process.

\bibliographystyle{unsrt}
\bibliography{reference}

\begin{thebibliography}{10}

\bibitem{bullet}
Bullet physics engine.
\newblock https://pybullet.org/.

\bibitem{mujoco}
Mujoco physics engine.
\newblock http://www.mujoco.org/.

\bibitem{physx}
Physx physics engine.
\newblock https://developer.nvidia.com/physx-sdk.

\bibitem{brax}
Brax - a differentiable physics engine for large scale rigid body simulation.
\newblock http://github.com/google/brax.

\bibitem{RaiSim}
Raisim physics engine.
\newblock https://raisim.com/.

\bibitem{sofa}
Sofa physics engine.
\newblock https://www.sofa-framework.org/.

\bibitem{stewart1996implicit}
D.~E. Stewart and J.~C. Trinkle.
\newblock An implicit time-stepping scheme for rigid body dynamics with inelastic collisions and coulomb friction.
\newblock {\em International Journal for Numerical Methods in Engineering}, 39(15):2673--2691, 1996.

\bibitem{potra1997formulating}
M.~Anitescu and F.~A. Potra.
\newblock Formulating dynamic multi-rigid-body contact problems with friction as solvable linear complementarity problems.
\newblock {\em Nonlinear Dynamics}, 14:231--247, 1997.

\bibitem{llyod2005fast}
J.~E. Lloyd.
\newblock Fast implementation of lemke's algorithm for rigid body contact simulation.
\newblock In {\em IEEE International Conference on Robotics and Automation}, 2005.

\bibitem{lee2018dart}
J.~Lee, M.~X. Grey, S.~Ha, T.~Kunz, S.~Jain, Y.~Ye, S.~S. Srinivasa, M.~Stilman, and C.~K. Liu.
\newblock {DART}: Dynamic animation and robotics toolkit.
\newblock {\em The Journal of Open Source Software}, 3(22):500, 2018.

\bibitem{ode}
Open dynamics engine.
\newblock http://ode.org.

\bibitem{macklin2014unified}
M.~Macklin, M.~M\"{u}ller, N.~Chentanez, and T.~Kim.
\newblock Unified particle physics for real-time applications.
\newblock {\em ACM Transactions on Graphics}, 33(4), 2014.

\bibitem{macklin2016xpbd}
M.~Macklin, M.~Müller, and N.~Chentanez.
\newblock Xpbd: Position-based simulation of compliant constrained dynamics.
\newblock In {\em International Conference on Motion in Games}, pages 49--54, 2016.

\bibitem{todorov2014convex}
E.~Todorov.
\newblock Convex and analytically-invertible dynamics with contacts and constraints: Theory and implementation in mujoco.
\newblock In {\em IEEE International Conference on Robotics and Automation}, pages 6054--6061, 2014.

\bibitem{horak2019similarities}
P.~C. Horak and J.~C. Trinkle.
\newblock On the similarities and differences among contact models in robot simulation.
\newblock {\em IEEE Robotics and Automation Letters}, 4(2):493--499, 2019.

\bibitem{hwangbo2018per}
J.~Hwangbo, J.~Lee, and M.~Hutter.
\newblock Per-contact iteration method for solving contact dynamics.
\newblock {\em IEEE Robotics and Automation Letters}, 3(2):895--902, 2018.

\bibitem{macklin2019small}
M.~Macklin, K.~Storey, M.~Lu, P.~Terdiman, N.~Chentanez, S.~Jeschke, and M.~Müller.
\newblock Small steps in physics simulation.
\newblock In {\em ACM SIGGRAPH/Eurographics Symposium on Computer Animation}, 2019.

\bibitem{muller2020detailed}
M.~M{\"u}ller, M.~Macklin, N.~Chentanez, S.~Jeschke, and T.~Kim.
\newblock Detailed rigid body simulation with extended position based dynamics.
\newblock In {\em Computer Graphics Forum}, volume~39, pages 101--112. Wiley Online Library, 2020.

\bibitem{makoviychuk2021isaac}
V.~Makoviychuk, L.~Wawrzyniak, Y.~Guo, M.~Lu, K.~Storey, M.~Macklin, D.~Hoeller, N.~Rudin, A.~Allshire, A.~Handa, et~al.
\newblock Isaac gym: High performance gpu-based physics simulation for robot learning.
\newblock {\em arXiv preprint arXiv:2108.10470}, 2021.

\bibitem{geilinger2020add}
M.~Geilinger, D.~Hahn, J.~Zehnder, M.~B{\"a}cher, B.~Thomaszewski, and S.~Coros.
\newblock Add: Analytically differentiable dynamics for multi-body systems with frictional contact.
\newblock {\em ACM Transactions on Graphics}, 39(6):1--15, 2020.

\bibitem{castro2020transition}
A.~M. Castro, A.~Qu, N.~Kuppuswamy, A.~Alspach, and M.~Sherman.
\newblock A transition-aware method for the simulation of compliant contact with regularized friction.
\newblock {\em IEEE Robotics and Automation Letters}, 5(2):1859--1866, 2020.

\bibitem{howell2022dojo}
T.~A. Howell, S.~Le~Cleac’h, J.~Z. Kolter, M.~Schwager, and Z.~Manchester.
\newblock Dojo: A differentiable simulator for robotics.
\newblock {\em arXiv preprint arXiv:2203.00806}, 9, 2022.

\bibitem{macklin2019nonsmooth}
M.~Macklin, K.~Erleben, M.~M\"{u}ller, N.~Chentanez, S.~Jeschke, and V.~Makoviychuk.
\newblock Non-smooth newton methods for deformable multi-body dynamics.
\newblock {\em ACM Transactions on Graphics}, 38(5):1--20, 2019.

\bibitem{castro2022unconstrained}
A.~M. Castro, F.~N Permenter, and X.~Han.
\newblock An unconstrained convex formulation of compliant contact.
\newblock {\em IEEE Transactions on Robotics}, 39(2):1301--1320, 2022.

\bibitem{drake}
Russ Tedrake and the Drake Development~Team.
\newblock Drake: Model-based design and verification for robotics, 2019.

\bibitem{moreau1962fonctions}
J.~J. Moreau.
\newblock Fonctions convexes duales et points proximaux dans un espace hilbertien.
\newblock {\em Comptes rendus hebdomadaires des s{\'e}ances de l'Acad{\'e}mie des sciences}, 255:2897--2899, 1962.

\bibitem{parikh2014proximal}
N.~Parikh, S.~Boyd, et~al.
\newblock Proximal algorithms.
\newblock {\em Foundations and trends{\textregistered} in Optimization}, 1(3):127--239, 2014.

\bibitem{stellato2020osqp}
B.~Stellato, G.~Banjac, P.~Goulart, A.~Bemporad, and S.~Boyd.
\newblock Osqp: an operator splitting solver for quadratic programs.
\newblock {\em Mathematical Programming Computation}, 12:637--672, 2020.

\bibitem{hermans2019qpalm}
B.~Hermans, A.~Themelis, and P.~Patrinos.
\newblock Qpalm: a newton-type proximal augmented lagrangian method for quadratic programs.
\newblock In {\em IEEE Conference on Decision and Control}, pages 4325--4330. IEEE, 2019.

\bibitem{sopasakis2019superscs}
P.~Sopasakis, K.~Menounou, and P.~Patrinos.
\newblock Superscs: fast and accurate large-scale conic optimization.
\newblock In {\em European Control Conference}, pages 1500--1505, 2019.

\bibitem{bazzana2024augmented}
B.~Bazzana, H.~Andreasson, and G.~Grisetti.
\newblock How-to augmented lagrangian on factor graphs.
\newblock {\em IEEE Robotics and Automation Letters}, 2024.

\bibitem{howell2019altro}
T.~A. Howell, B.~E. Jackson, and Z.~Manchester.
\newblock Altro: A fast solver for constrained trajectory optimization.
\newblock In {\em IEEE/RSJ International Conference on Intelligent Robots and Systems}, pages 7674--7679, 2019.

\bibitem{carpentier2021proximal}
J.~Carpentier, R.~Budhiraja, and N.~Mansard.
\newblock Proximal and sparse resolution of constrained dynamic equations.
\newblock In {\em Robotics: Science and Systems}, 2021.

\bibitem{lee2023modular}
J.~Lee, M.~Lee, and D.~Lee.
\newblock Modular and parallelizable multibody physics simulation via subsystem-based admm.
\newblock In {\em IEEE International Conference on Robotics and Automation}, pages 10132--10138, 2023.

\bibitem{kim2017haptic}
M.~Kim, Y.~Lee, Y.~Lee, and D.~J. Lee.
\newblock Haptic rendering and interactive simulation using passive midpoint integration.
\newblock {\em International Journal of Robotics Research}, 36(12):1341--1362, 2017.

\bibitem{acary2011formulation}
V.~Acary, F.~Cadoux, C.~Lemar{\'e}chal, and J.~Malick.
\newblock A formulation of the linear discrete coulomb friction problem via convex optimization.
\newblock {\em ZAMM-Journal of Applied Mathematics and Mechanics/Zeitschrift f{\"u}r Angewandte Mathematik und Mechanik}, 91(2):155--175, 2011.

\bibitem{daviet2020simple}
G.~Daviet.
\newblock Simple and scalable frictional contacts for thin nodal objects.
\newblock {\em ACM Transactions on Graphics}, 39(4), 2020.

\bibitem{verschoor2019efficient}
M.~Verschoor and A.~C. Jalba.
\newblock Efficient and accurate collision response for elastically deformable models.
\newblock {\em ACM Transactions on Graphics}, 38(2), 2019.

\bibitem{baumgarte1972stabilization}
J.~Baumgarte.
\newblock Stabilization of constraints and integrals of motion in dynamical systems.
\newblock {\em Computer Methods in Applied Mechanics and Engineering}, 1(1):1--16, 1972.

\bibitem{andrews2017geometric}
S.~Andrews, M.~Teichmann, and P.~G. Kry.
\newblock Geometric stiffness for real-time constrained multibody dynamics.
\newblock {\em Computer Graphics Forum}, 36(2):235--246, 2017.

\bibitem{pan2012fcl}
J.~Pan, S.~Chitta, and D.~Manocha.
\newblock Fcl: A general purpose library for collision and proximity queries.
\newblock In {\em IEEE International Conference on Robotics and Automation}, pages 3859--3866, 2012.

\bibitem{lee2023differentiable}
M.~Lee, J.~Lee, and D.~Lee.
\newblock Differentiable dynamics simulation using invariant contact mapping and damped contact force.
\newblock In {\em IEEE International Conference on Robotics and Automation}, pages 11683--11689, 2023.

\bibitem{boyd2011distributed}
S.~Boyd, N.~Parikh, E.~Chu, B.~Peleato, and J.~Eckstein.
\newblock {\em Distributed Optimization and Statistical Learning via the Alternating Direction Method of Multipliers}.
\newblock Foundations and Trends in Machine Learning, 2011.

\bibitem{wang2019admm}
J.~Wang, F.~Yu, X.~Chen, and L.~Zhao.
\newblock Admm for efficient deep learning with global convergence.
\newblock In {\em ACM International Conference on Knowledge Discovery $\&$ Data Mining}, pages 111--119, 2019.

\bibitem{dai2023augmented}
Y.~Dai and L.~Zhang.
\newblock The augmented lagrangian method can approximately solve convex optimization with least constraint violation.
\newblock {\em Mathematical Programming}, 200(2):633--667, 2023.

\bibitem{lee2022large}
J.~Lee, M.~Lee, and D.~Lee.
\newblock Large-dimensional multibody dynamics simulation using contact nodalization and diagonalization.
\newblock {\em IEEE Transactions on Robotics}, 39(2):1419--1438, 2022.

\bibitem{hintermuller2010semismooth}
M.~Hinterm{\"u}ller.
\newblock Semismooth newton methods and applications.
\newblock {\em Department of Mathematics, Humboldt-University of Berlin}, 2010.

\bibitem{nocedal1999numerical}
J.~Nocedal and S.~J Wright.
\newblock {\em Numerical optimization}.
\newblock Springer, 1999.

\bibitem{featherstone2014rigid}
R.~Featherstone.
\newblock {\em Rigid body dynamics algorithms}.
\newblock Springer, 2014.

\bibitem{ghadimi2014optimal}
E.~Ghadimi, A.~Teixeira, I.~Shames, and M.~Johansson.
\newblock Optimal parameter selection for the alternating direction method of multipliers (admm): Quadratic problems.
\newblock {\em IEEE Transactions on Automatic Control}, 60(3):644--658, 2014.

\bibitem{giselsson2016linear}
P.~Giselsson and S.~Boyd.
\newblock Linear convergence and metric selection for douglas-rachford splitting and admm.
\newblock {\em IEEE Transactions on Automatic Control}, 62(2):532--544, 2016.

\bibitem{eigenweb}
Ga\"{e}l G., Beno\^{i}t J., et~al.
\newblock Eigen v3.
\newblock http://eigen.tuxfamily.org, 2010.

\bibitem{yoon2022fast}
J.~Yoon, M.~Lee, D.~Son, and D.~Lee.
\newblock Fast and accurate data-driven simulation framework for contact-intensive tight-tolerance robotic assembly tasks.
\newblock {\em arXiv preprint arXiv:2202.13098}, 2022.

\bibitem{macklin2020local}
M.~Macklin, K.~Erleben, M.~M{\"u}ller, N.~Chentanez, S.~Jeschke, and Z.~Corse.
\newblock Local optimization for robust signed distance field collision.
\newblock {\em Proceedings of the ACM on Computer Graphics and Interactive Techniques}, 3(1):1--17, 2020.

\bibitem{jain1999data}
A.~K. Jain, M.~N. Murty, and P.~J. Flynn.
\newblock Data clustering: a review.
\newblock {\em ACM computing surveys (CSUR)}, 31(3):264--323, 1999.

\bibitem{de1996semismooth}
T.~De~Luca, F.~Facchinei, and C.~Kanzow.
\newblock A semismooth equation approach to the solution of nonlinear complementarity problems.
\newblock {\em Mathematical programming}, 75:407--439, 1996.

\bibitem{fletcher1964function}
R.~Fletcher and C.~M Reeves.
\newblock Function minimization by conjugate gradients.
\newblock {\em The computer journal}, 7(2):149--154, 1964.

\bibitem{themelis2019supermann}
A.~Themelis and P.~Patrinos.
\newblock Supermann: a superlinearly convergent algorithm for finding fixed points of nonexpansive operators.
\newblock {\em IEEE Transactions on Automatic Control}, 64(12):4875--4890, 2019.

\bibitem{li2020incremental}
M.~Li, Z.~Ferguson, T.~Schneider, T.~R. Langlois, D.~Zorin, D.~Panozzo, C.~Jiang, and D.~M Kaufman.
\newblock Incremental potential contact: intersection-and inversion-free, large-deformation dynamics.
\newblock {\em ACM Transactions on Graphics}, 39(4):49, 2020.

\bibitem{castro2023theory}
A.~Castro, X.~Han, and J.~Masterjohn.
\newblock A theory of irrotational contact fields.
\newblock {\em arXiv preprint arXiv:2312.03908}, 2023.

\bibitem{ludema2018friction}
K.~C. Ludema and L.~Ajayi.
\newblock {\em Friction, wear, lubrication: a textbook in tribology}.
\newblock CRC press, 2018.

\end{thebibliography}

\end{document}